
\documentclass[]{article}
\usepackage{proceed2e}

\newcommand*{\WITHSUPPLEMENT}{}%

\usepackage{times}
\usepackage{url}

\title{Encoding Markov Logic Networks in Possibilistic Logic}

\author{{\bf Ond\v{r}ej Ku\v{z}elka} \\  
School of CS \& Informatics \\  
Cardiff University\\ 
Cardiff, UK \\ 
\And 
{\bf Jesse Davis}  \\ 
Department of Computer Science          \\ 
KU Leuven \\
Leuven, Belgium\\
\And 
{\bf Steven Schockaert}   \\ 
School of CS \& Informatics \\  
Cardiff University\\ 
Cardiff, UK \\  } 



\usepackage{amssymb,amsmath,amsthm}
\usepackage{xspace}
\usepackage{amsthm}
\usepackage{amsmath}
\usepackage{amssymb}
\usepackage{mathrsfs}

\newcommand{\mapent}{\vdash_{\textit{MAP}}}

\newcommand{\cP}{\Theta^k}

\newcommand{\possent}{\vdash_{poss}}

\newtheorem{definition}{Transformation}

\newtheorem{lemma}{Lemma}
\newtheorem{proposition}{Proposition}

\newtheorem{example}{Example}
\newtheorem{corollary}{Corollary}

\usepackage{stmaryrd}
\newcommand{\sem}[1]{\llbracket #1 \rrbracket}

\usepackage{xcolor}

\begin{document}

\maketitle

\begin{abstract}
Markov logic uses weighted formulas to compactly encode a probability distribution over possible worlds. Despite the use of logical formulas, Markov logic networks (MLNs) can be difficult to interpret, due to the often counter-intuitive meaning of their weights. To address this issue, we propose a method to construct a possibilistic logic theory that exactly captures what can be derived from a given MLN using maximum a posteriori (MAP) inference. Unfortunately, the size of this theory is exponential in general. We therefore also propose two methods which can derive compact theories that still capture MAP inference, but only for specific types of evidence. These theories can be used, among others, to make explicit the hidden assumptions underlying an MLN or to explain the predictions it makes.
\end{abstract}

\section{INTRODUCTION}
Markov logic \cite{Richardson2006} and possibilistic logic \cite{DLP} are two popular logics for modelling uncertain beliefs. Both logics share a number of important characteristics. At the syntactic level, formulas correspond to pairs $(\alpha,\lambda)$, consisting of a classical formula $\alpha$ and a certainty weight $\lambda$, while at the semantic level, sets of these formulas induce a mapping from possible worlds to $[0,1]$, encoding the relative plausibility of each possible world.

Despite their close similarities, however, Markov logic and possibilistic logic have been developed in different communities and for different purposes: Markov logic has mainly been studied in a machine learning context whereas possibilistic logic has been studied as a knowledge representation language. This reflects the complementary strengths and weaknesses of these logics. On the one hand, the qualitative nature of possibilistic logic makes it challenging to use for learning; although a few interesting approaches for learning possibilistic logic theories from data have been explored (e.g.\ \cite{Serrurier2007}), their impact on applications to date has been limited. On the other hand, the intuitive meaning of Markov logic theories is often difficult to grasp, which limits the potential of Markov logic for knowledge representation. The main culprit is that the meaning of a theory can often not be understood by looking at the individual formulas in isolation. This issue, among others, has been highlighted in \cite{Thimm:2014b}, where coherence measures are proposed that evaluate to what extent the formulation of a Markov logic theory is misleading. 
\begin{example}\label{ex1}
Consider the following Markov logic formulas:
\begin{align*}
+\infty&: & \textit{antarctic-bird}(X) &\rightarrow \textit{bird}(X)\\
10&: & \textit{bird}(X) &\rightarrow \textit{flies}(X)\\
5&: &  \textit{antarctic-bird}(X) &\rightarrow \neg\textit{flies}(X)
\end{align*}
While the last formula might appear to suggest that antarctic birds cannot fly, in combination with the other two formulas, it merely states that antarctic birds are less likely to fly than birds in general. 
\end{example}
Possibilistic logic is based on a purely qualitative, comparative model of uncertainty: while a Markov logic theory compactly encodes a probability distribution over the set of possible worlds, a possibilistic logic theory merely encodes a ranking of these possible worlds. Even though a probability distribution offers a much richer uncertainty model, many applications of Markov logic are based on MAP inference, which only relies on the ranking induced by the probability distribution. 

In this paper, we first show how to construct a possibilistic logic theory $\Theta$, given a Markov logic theory $\mathcal{M}$, such that the conclusions that we can infer from $\Theta$ are exactly those conclusions that we can obtain from $\mathcal{M}$ using MAP inference. Our construction can be seen as the syntactic counterpart of the probability-possibility transformation from \cite{probpossTransformation}. In principle, it allows us to combine the best of both worlds, using $\mathcal{M}$ for making predictions while using $\Theta$ for elucidating the knowledge that is captured by $\mathcal{M}$ (e.g.\ to verify that the theory $\mathcal{M}$ is sensible). However, the size of $\Theta$ can be exponential in the size of $\mathcal{M}$, which is unsurprising given that the computational complexity of MAP inference is higher than the complexity of inference in possibilistic logic. To overcome this problem, we begin by studying ground (i.e.\ propositional) theories and propose two novel approaches for transforming a ground MLN into a compact ground possibilistic logic theory that still correctly captures MAP inference, but only for specific types of evidence (e.g.\ sets of at most $k$ literals). Then we lift one of these approaches such that it can transform a first-order MLN into a first-order possibilistic logic theory. Finally, we present several examples that illustrate how the transformation process can be used to help identify unintended consequences of a given MLN, and more generally, to better understand its behaviour.

The remainder of the paper is structured as follows. In the next section, we provide some background on Markov logic and possibilistic logic. In Section \ref{secEncodingGround}, we analyse the relation between MAP inference in ground Markov logic networks and possibilistic logic inference, introducing in particular two methods for deriving compact theories. Section \ref{secEncodingFirstOrder} then discusses how we can exploit the symmetries in the case of an ungrounded Markov logic network, while Section \ref{secExamples} provides some illustrative examples. Finally, we provide an overview of related work in Section \ref{secRelatedWork}.

Due to space limitations, some of the proofs have been omitted from this paper. These proofs can be found in an online appendix.\footnote{\url{http://arxiv.org/abs/1506.01432}}

\section{BACKGROUND}\label{secBackground}

\subsection{MARKOV LOGIC}



A Markov logic network (MLN)~\cite{Richardson2006} is a set of pairs $(F, w_F)$, where $F$ is a formula in first-order logic and $w_F$ is a real number, intuitively reflecting a penalty that is applied to possible worlds (i.e.\ logical interpretations) that violate $F$. In examples, we will also use the notation $w_F:\, F$ to denote the formula $(F,w_F)$. An MLN serves as a template for constructing a propositional Markov network. 
In particular, given a set of constants $C$, an MLN $\mathcal{M}$ induces the following probability distribution on possible worlds $\omega$: 
\begin{eqnarray}
p_{\mathcal{M}}(\omega) = \frac{1}{Z}\mbox{exp}\left(\sum_{(F,w_F) \in \mathcal{M}} w_F n_F(\omega)\right),
\label{e:mln}
\end{eqnarray}
\noindent
where $n_F(x)$ is the number of true groundings of $F$ in the possible world $\omega$, and $Z$ 
is a normalization constant to ensure that $p_\mathcal{M}$ can be interpreted as a probability distribution. Sometimes, formulas $(F,w_F)$ are considered where $w_F = +\infty$, to represent hard constraints. In such cases, we define $p_{\mathcal{M}}(\omega)=0$ for all possible worlds that do not satisfy all of the hard constraints, and only formulas with a real-valued weight are considered in \eqref{e:mln} for the possible worlds that do. Note that a Markov logic network can be seen as a weighted set of propositional formulas, which are obtained by grounding the formulas in $\mathcal{M}$ w.r.t.\ the set of constants $C$ in the usual way. In the particular case that all formulas in $\mathcal{M}$ are already grounded, $\mathcal{M}$ corresponds to a theory in penalty logic \cite{Saint-cyr94penaltylogic}. 

One common inference task in MLNs is full MAP inference. In this setting, given a set of ground literals (the evidence) the goal is to compute the most probable configuration of all unobserved variables (the queries). Two standard approaches for performing MAP inference in MLNs are to employ a strategy based on MaxWalkSAT~\cite{Richardson2006} or to use a cutting plane based strategy~\cite{riedel08,noessner13}. Given a set of ground formulas $E$, we write $\max(\mathcal{M},E)$ for the set of most probable worlds of the MLN that satisfy $E$.
For each $\omega \in \max(\mathcal{M},E)$, $\sum_{(F,w_F) \in \mathcal{M}} w_F n_F(\omega)$ evaluates to the same value, which we will refer to as $\textit{sat}(\mathcal{M},E)$. We define the penalty $\textit{pen}(\mathcal{M},E)$ of $E$ as follows:
$$
\textit{pen}(\mathcal{M},E) = \textit{sat}(\mathcal{M},\emptyset)-\textit{sat}(\mathcal{M},E)
$$
We will sometimes identify possible worlds with the set of literals they make true, writing  $\textit{pen}(\mathcal{M},\omega)$. We will also write $\textit{pen}(\mathcal{M},\alpha)$, with $\alpha$ a ground formula, as a shorthand for $\textit{pen}(\mathcal{M},\{\alpha\})$. We will consider the following inference relation, which has been considered among others in \cite{Saint-cyr94penaltylogic}:
\begin{align}
(\mathcal{M},E) \vdash_{\textit{MAP}} \alpha \quad\text{iff}\quad \forall \omega \in \max(\mathcal{M},E): \omega \models \alpha
\end{align}
with $\mathcal{M}$ an MLN, $\alpha$ a ground formula and $E$ a set of ground formulas. It can be shown that checking $(\mathcal{M},E) \vdash_{\textit{MAP}} \alpha$ for a ground network $\mathcal{M}$ is $\Delta_2^P$-complete\footnote{The complexity class $\Delta_2^P$ contains those decision problems that can be solved in polynomial time on a deterministic Turing machine with access to an NP oracle.} \cite{cayrol1994complexity}.

\subsection{POSSIBILISTIC LOGIC}
A possibility distribution in a universe $\Omega$ is a mapping $\pi$ from $\Omega$ to $[0,1]$, encoding our knowledge about the possible values that a given variable $X$ can take; throughout this paper, we will assume that all universes are finite. For each $x\in \Omega$, $\pi(x)$ is called the possibility degree of $x$. By convention, in a state of complete ignorance, we have $\pi(x)=1$ for all $x\in \Omega$; conversely, if $X=x_0$ is known, we have $\pi(x_0)=1$ and $\pi(x)=0$ for $x \neq x_0$. Possibility theory \cite{ZadehPossibility,dubois1998possibility} is based on the possibility measure $\Pi$ and dual necessity measure $N$, induced by a possibility distribution $\pi$ as follows ($A\subseteq \Omega$):
\begin{align*}
\Pi(A) &= \max_{a\in A} \pi(a)\\
N(A) &= 1 - \Pi(\Omega\setminus A)
\end{align*}
Intuitively, $\Pi(A)$ is the degree to which available evidence is compatible with the view that $X$ belongs to $A$, whereas $N(A)$ is the degree to which available evidence implies that $X$ belongs to $A$, i.e.\ the degree to which it is certain that $X$ belongs to $A$.



A theory in possibilistic logic \cite{DLP} is a set of formulas of the form $(\alpha,\lambda)$, where $\alpha$ is a propositional formula and $\lambda \in [0,1]$ is a certainty weight. A possibility distribution $\pi$ satisfies $(\alpha,\lambda)$ iff $N(\sem{\alpha}) \geq \lambda$, with $N$ the necessity measure induced by $\pi$ and $\sem{\alpha}$ the set of propositional models of $\alpha$. We say that a possibilistic logic theory $\Theta$ entails $(\alpha,\lambda)$, written $\Theta \models (\alpha,\lambda)$, if every possibility distribution which satisfies all the formulas in $\Theta$ also satisfies $(\alpha,\lambda)$. 
A possibility distribution $\pi_1$ is called less specific than a possibility distribution $\pi_2$ if $\pi_1(\omega)\geq \pi_2(\omega)$ for every $\omega$. It can be shown that the set of models of $\Theta$ always has a least element w.r.t.\ the minimal specificity ordering, which is called the least specific model $\pi^*$ of $\Theta$. It is easy to see that $\Theta \models (\alpha,\lambda)$ iff $\pi^*$ satisfies $(\alpha,\lambda)$.

Even though the semantics of possibilistic logic is defined at the propositional level, we will also use first-order formulas such as $(p(X) \rightarrow q(X,Y), \lambda)$ throughout the paper. As in Markov logic, we will interpret these formulas as abbreviations for a set of propositional formulas, obtained using grounding in the usual way. In particular, we will always assume that first-order formulas are defined w.r.t.\ a finite set of constants.

The $\lambda$-cut $\Theta_{\lambda}$ of a possibilistic logic theory $\Theta$ is defined as follows:
\begin{align*}
\Theta_{\lambda} &= \{\alpha \,|\, (\alpha,\mu)\in \Theta, \mu\geq \lambda\}
\end{align*}
It can be shown that $\Theta \models (\alpha,\lambda)$ iff $\Theta_{\lambda} \models \alpha$, which means that inference in possibilistic logic can straightforwardly be implemented using a SAT solver. 

An inconsistency-tolerant inference relation $\possent$ for possibilistic logic can be defined as follows:
\begin{align*}
\Theta \possent \alpha \quad\text{iff}\quad \Theta_{\textit{con}(\Theta)} \models \alpha
\end{align*}
where the consistency level $\textit{con}(\Theta)$ of $\Theta$ is the lowest certainty level $\lambda$ for which $\Theta_{\lambda}$ is satisfiable (among the certainty levels that occur in $\Theta$). Note that all formulas with a certainty level below $\textit{con}(\Theta)$ are ignored, even if they are unrelated to any inconsistency in $\Theta$. This observation is known as the drowning effect.

We will write $(\Theta,E)\possent \alpha$, with $E$ a set of propositional formulas, as an abbreviation for $\Theta\cup \{(e,1)\,|\,e\in E\} \possent \alpha$. Despite its conceptual simplicity, $\possent$ has many desirable properties. Among others, it is closely related to AGM belief revision \cite{273026} and default reasoning \cite{benferhat1997nonmonotonic}. 
It can be shown that checking $\Theta \possent (\alpha,\lambda)$ is a $\Theta_2^P$ complete problem\footnote{The complexity class $\Theta_2^P$ contains those decision problems that can be solved in polynomial time on a deterministic Turing machine, by making at most a logaritmic number of calls to an NP oracle.} \cite{La2001.1}. 
In this paper, $\possent$ will allow us to capture the non-monotonicity of MAP inference.
\begin{example}
Let $\Theta$ consist of the following formulas:
\begin{align*}
&(\textit{penguin}(X) \rightarrow \textit{bird}(X), 1)\\
&(\textit{penguin}(X) \rightarrow \neg \textit{flies}(X), 1)\\
&(\textit{bird}(X) \rightarrow \textit{flies}(X), 0.5)
\end{align*}
Then we find:
\begin{align*}
&(\Theta, \{\textit{bird}(\textit{tweety})\}) \possent\textit{flies}(\textit{tweety})\\
&(\Theta, \{\textit{bird}(\textit{tweety}),\textit{penguin}(\textit{tweety})\}) \possent \neg\textit{flies}(\textit{tweety})
\end{align*}
\end{example}
In general, $\possent$ allows us to model rules with exceptions, by ensuring that rules about specific contexts have a higher certainty weight than rules about general contexts. 
\section{ENCODING GROUND NETWORKS}\label{secEncodingGround}
Throughout this section, we will assume that $\mathcal{M}$ is a ground MLN in which all the weights are strictly positive. This can always be guaranteed for ground MLNs by replacing formulas $(\alpha,\lambda)$ with $\lambda < 0$ by $(\neg \alpha,-\lambda)$, and by discarding any formula whose weight is 0.
For a subset $X \subseteq \mathcal{M}$, we write $X^*$ for the set of corresponding classical formulas, e.g.\ for $X=\{(F_1,w_1),...,(F_n,w_n)\}$ we have $X^*=\{F_1,...,F_n\}$. In particular, $\mathcal{M}^*$ are the classical formulas appearing in the MLN $\mathcal{M}$.  

The following transformation constructs a possibilistic logic theory that is in some sense equivalent to a given MLN. It is inspired by the probability-possiblity transformation from \cite{probpossTransformation}. 
\begin{definition}\label{eqStandardEncoding}
We define the possibilistic logic theory $\Theta_{\mathcal{M}}$ corresponding to an MLN $\mathcal{M}$ as follows:
\begin{align}
\{( \bigvee X^*, \phi(\neg \bigvee X^*)) \,|\, X\subseteq \mathcal{M},\phi(\neg \bigvee X^*) > 0\}
\end{align}
where for a propositional formula $\alpha$:
\begin{align*}
\phi(\alpha) = 
\begin{cases}
\frac{K + \textit{pen}(\mathcal{M},\alpha)}{L} & \text{if $\alpha$ satisfies the hard constraints}\\
1 & \text{otherwise}
\end{cases}
\end{align*}
and the constants $K$ and $L$ are chosen such that $0=\phi(\top)\leq \phi(\alpha) < 1$ for every $\alpha$ that satisfies the hard constraints (i.e.\ the formulas with weight $+\infty$).
\end{definition}
In the following we will use the notations $\phi(\omega)$ for a possible world $\omega$ and $\phi(E)$ for a set of formulas $E$, defined entirely analogously. Throughout the paper we will also write ($-K < x < L-K$):
$$
\lambda_x =  \frac{K + x}{L}
$$
The correctness of Transformation \ref{eqStandardEncoding} follows from the next proposition, which is easy to show.
\begin{proposition}\label{prop1}
Let $\mathcal{M}$ be a ground MLN and $\Theta_{\mathcal{M}}$ the corresponding possibilistic logic theory. Let $\pi$ be the least specific model of $\Theta_\mathcal{M}$. It holds that:
$$
\pi(\omega) = 1 - \phi(\omega)
$$
\end{proposition}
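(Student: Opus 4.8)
The plan is to combine the definition of $\phi$ with the standard closed form for the least specific model of a possibilistic logic theory. Unwinding the definition of the necessity measure $N$ and of satisfaction, one sees that a possibility distribution $\pi$ is a model of $\Theta_{\mathcal{M}}$ iff $\pi(\omega)\leq 1-\max\{\mu\mid(\alpha,\mu)\in\Theta_{\mathcal{M}},\,\omega\not\models\alpha\}$ for every $\omega$ (with the convention $\max\emptyset=0$), so the least specific model is obtained by taking equality, $\pi(\omega)=1-\max\{\mu\mid(\alpha,\mu)\in\Theta_{\mathcal{M}},\,\omega\not\models\alpha\}$. Hence I would fix $\omega$, set $X_\omega=\{(F,w_F)\in\mathcal{M}\mid\omega\not\models F\}$, and note that $\omega$ violates the formula $(\bigvee X^*,\phi(\neg\bigvee X^*))$ exactly when $\omega$ falsifies every $F\in X^*$, i.e.\ when $X\subseteq X_\omega$. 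The proposition then reduces to the identity $\max\{\phi(\neg\bigvee X^*)\mid X\subseteq X_\omega\}=\phi(\omega)$.

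I would prove the two inequalities separately. For ``$\leq$'': for any $X\subseteq X_\omega$, the world $\omega$ is itself a model of $\neg\bigvee X^*=\bigwedge_{F\in X^*}\neg F$; so if $\omega$ satisfies the hard constraints, then so does $\neg\bigvee X^*$, whence $\textit{sat}(\mathcal{M},\neg\bigvee X^*)\geq\textit{sat}(\mathcal{M},\{\omega\})$, i.e.\ $\textit{pen}(\mathcal{M},\neg\bigvee X^*)\leq\textit{pen}(\mathcal{M},\omega)$, and therefore $\phi(\neg\bigvee X^*)\leq\phi(\omega)$ since $\phi$ is increasing in the penalty; if instead $\omega$ violates a hard constraint, then $\phi(\omega)=1$ and the inequality is trivial. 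For ``$\geq$'' it is enough to evaluate the left-hand side at the single subset $X=X_\omega$ and show $\phi(\neg\bigvee X_\omega^*)\geq\phi(\omega)$. Here every model $\omega'$ of $\neg\bigvee X_\omega^*$ falsifies at least all the formulas that $\omega$ falsifies, so the total weight of the MLN formulas it violates is at least that of those violated by $\omega$; because all weights in $\mathcal{M}$ are strictly positive this gives $\sum_F w_F n_F(\omega')\leq\sum_F w_F n_F(\omega)$, hence $\textit{sat}(\mathcal{M},\neg\bigvee X_\omega^*)\leq\textit{sat}(\mathcal{M},\{\omega\})$ and so $\phi(\neg\bigvee X_\omega^*)\geq\phi(\omega)$ (the case where $\omega$, and therefore $\neg\bigvee X_\omega^*$, fails the hard constraints is again immediate, both sides equalling $1$ because then $X_\omega$ contains a hard constraint).

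Combining the two bounds yields the identity and hence $\pi(\omega)=1-\phi(\omega)$. A few routine points remain to be checked: dropping the side condition $\phi(\neg\bigvee X^*)>0$ from the definition of $\Theta_{\mathcal{M}}$ does not change the maximum, since it only removes terms equal to $0$ and these all vanish precisely when $\phi(\omega)=0$; the subset $X=\emptyset$ contributes only the trivial pair $(\bot,0)$; and the normalisation $\phi(\top)=0$ together with positivity of the weights keeps all values in $[0,1)$ on worlds that satisfy the hard constraints. The only genuinely load-bearing step is the monotonicity argument in the ``$\geq$'' direction, and it is exactly the strict positivity of the MLN weights that is used there: it guarantees that a world violating a \emph{larger} set of formulas has a \emph{larger} penalty, which is what pins the maximum down to $\phi(\omega)$ rather than to something strictly smaller.
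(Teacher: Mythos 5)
Your proposal is correct and follows essentially the same route as the paper's own proof: both reduce the claim to the closed form $\pi(\omega)=1-\max\{\mu\mid(\alpha,\mu)\in\Theta_{\mathcal{M}},\ \omega\not\models\alpha\}$, identify the violated set $X_\omega$, and pin the maximum down to $\phi(\omega)$ via the two monotonicity observations about $\textit{pen}$. The only difference is one of detail: the paper simply asserts $\textit{pen}(\mathcal{M},\neg\bigvee X_\omega^*)=\textit{pen}(\mathcal{M},\omega)$ and that higher-weight formulas cannot have $Y^*\subseteq X_\omega^*$, whereas you spell out both facts (correctly noting that strict positivity of the weights is what makes the ``$\geq$'' direction work).
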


\begin{corollary}\label{propCorrespondencePenalties}
Let $\mathcal{M}$ be a ground MLN and $\Theta_{\mathcal{M}}$ the corresponding possibilistic logic theory. It holds that for $\lambda <1$:
$$
\Theta_\mathcal{M} \models (\alpha,\lambda) \quad\text{iff}\quad \textit{pen}(\mathcal{M},\neg\alpha)\geq\lambda L - K
$$
and
$$
\Theta_\mathcal{M} \models (\alpha,1) \quad\text{iff}\quad \textit{pen}(\mathcal{M},\neg\alpha)=+\infty
$$
\end{corollary}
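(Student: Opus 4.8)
The plan is to reduce everything to Proposition~\ref{prop1}. Since the entailment relation $\models$ of possibilistic logic is governed by the least specific model (as recalled in the background section, $\Theta_\mathcal{M}\models(\alpha,\lambda)$ iff the least specific model $\pi$ of $\Theta_\mathcal{M}$ satisfies $(\alpha,\lambda)$), the statement $\Theta_\mathcal{M}\models(\alpha,\lambda)$ is equivalent to $N(\sem{\alpha})\geq\lambda$ for the necessity measure $N$ induced by $\pi$. First I would rewrite this necessity degree using the duality between $N$ and $\Pi$: $N(\sem{\alpha}) = 1 - \Pi(\sem{\neg\alpha}) = 1 - \max_{\omega\models\neg\alpha}\pi(\omega)$. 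Plugging in $\pi(\omega)=1-\phi(\omega)$ from Proposition~\ref{prop1} turns the right-hand side into $\min_{\omega\models\neg\alpha}\phi(\omega)$.

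Next I would show that $\min_{\omega\models\neg\alpha}\phi(\omega) = \phi(\neg\alpha)$. Here I would unfold the definitions of $\phi$ and $\textit{pen}$: because $\textit{sat}(\mathcal{M},\neg\alpha)$ is, by definition, the largest value of $\sum_{(F,w_F)\in\mathcal{M}}w_F n_F(\omega)$ over the worlds $\omega$ satisfying $\neg\alpha$ (and the hard constraints), we get $\textit{pen}(\mathcal{M},\neg\alpha)=\min_{\omega\models\neg\alpha}\textit{pen}(\mathcal{M},\omega)$, and since $\phi$ restricted to the worlds that satisfy the hard constraints is an increasing affine function of $\textit{pen}$, the minimum of $\phi$ over models of $\neg\alpha$ is attained and equals $\phi(\neg\alpha)$. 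The cases that need extra care are those where $\neg\alpha$ has no model satisfying the hard constraints (then $\phi(\omega)=1$ for every $\omega\models\neg\alpha$, so the minimum is $1$, matching $\phi(\neg\alpha)=1$ and $\textit{pen}(\mathcal{M},\neg\alpha)=+\infty$) and where $\neg\alpha$ is unsatisfiable (then $\Pi(\sem{\neg\alpha})=0$ by the convention on empty maxima, so $N(\sem{\alpha})=1$, again consistent with $\phi(\neg\alpha)=1$ and $\textit{pen}(\mathcal{M},\neg\alpha)=+\infty$).

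Finally I would conclude by elementary algebra, distinguishing $\lambda<1$ from $\lambda=1$. For $\lambda<1$: if $\neg\alpha$ satisfies the hard constraints, then $N(\sem{\alpha}) = \phi(\neg\alpha) = \lambda_{\textit{pen}(\mathcal{M},\neg\alpha)} = \frac{K+\textit{pen}(\mathcal{M},\neg\alpha)}{L}$, which is $\geq\lambda$ precisely when $\textit{pen}(\mathcal{M},\neg\alpha)\geq\lambda L - K$; if $\neg\alpha$ does not satisfy the hard constraints, then $N(\sem{\alpha})=1>\lambda$ and $\textit{pen}(\mathcal{M},\neg\alpha)=+\infty\geq\lambda L - K$, so both sides of the equivalence hold. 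For $\lambda=1$: since $\phi(\beta)=1$ holds exactly when $\beta$ fails the hard constraints, $N(\sem{\alpha}) = \phi(\neg\alpha)\geq 1$ iff $\neg\alpha$ fails the hard constraints iff $\textit{pen}(\mathcal{M},\neg\alpha)=+\infty$.

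I expect the only genuine obstacle to be the careful handling of the hard constraints and of the degenerate cases (a tautological $\alpha$, or an $\alpha$ whose negation is incompatible with the hard constraints); the ``generic'' part of the argument collapses to a couple of lines once Proposition~\ref{prop1} and the $N$--$\Pi$ duality are in place.
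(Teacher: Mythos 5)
Your proposal is correct and follows what is evidently the intended route: the paper offers no explicit proof of this corollary, presenting it as an immediate consequence of Proposition~\ref{prop1}, and your argument—passing from entailment to the least specific model, using the $N$--$\Pi$ duality to get $N(\sem{\alpha})=\min_{\omega\models\neg\alpha}\phi(\omega)=\phi(\neg\alpha)$, and then unwinding the affine relation between $\phi$ and $\textit{pen}$—is exactly that consequence spelled out. Your explicit treatment of the hard-constraint and unsatisfiable-$\neg\alpha$ cases is a welcome addition that the paper leaves implicit.
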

\begin{corollary}\label{propCorrespondence}
Let $\mathcal{M}$ be a ground MLN and $\Theta_{\mathcal{M}}$ the corresponding possibilistic logic theory. For $p_{\mathcal{M}}$ the probability distribution induced by $\mathcal{M}$ and $\pi$ the least specific model of $\Theta_\mathcal{M}$, it holds that
$$
p_{\mathcal{M}}(\omega_1) > p_{\mathcal{M}}(\omega_2) \quad \text{iff}\quad \pi(\omega_1) > \pi (\omega_2)
$$
for all possible worlds $\omega_1$ and $\omega_2$. In particular, it follows that for every propositional formula $\alpha$ and every set of propositional formulas $E$:
\begin{align}\label{eqInferenceEquivalence}
(\mathcal{M},E) \vdash_{\textit{MAP}} \alpha \quad\text{iff}\quad (\Theta,E) \possent \alpha
\end{align}
\end{corollary}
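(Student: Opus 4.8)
The plan is to derive Corollary~\ref{propCorrespondence} directly from Proposition~\ref{prop1} and Corollary~\ref{propCorrespondencePenalties}, reducing everything to properties of $\textit{pen}(\mathcal{M},\cdot)$ and of $\possent$. First I would establish the ordering claim. By Proposition~\ref{prop1} we have $\pi(\omega) = 1-\phi(\omega)$, so $\pi(\omega_1) > \pi(\omega_2)$ iff $\phi(\omega_1) < \phi(\omega_2)$. Unwinding the definition of $\phi$: if both worlds satisfy the hard constraints, $\phi(\omega_i) = \lambda_{\textit{pen}(\mathcal{M},\omega_i)}$, and since $\lambda_x$ is strictly increasing in $x$, this is equivalent to $\textit{pen}(\mathcal{M},\omega_1) < \textit{pen}(\mathcal{M},\omega_2)$, i.e.\ $\textit{sat}(\mathcal{M},\omega_1) > \textit{sat}(\mathcal{M},\omega_2)$, which by the monotonicity of $\exp$ and the common normalization constant $Z$ in \eqref{e:mln} is exactly $p_{\mathcal{M}}(\omega_1) > p_{\mathcal{M}}(\omega_2)$. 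The boundary cases where one or both worlds violate a hard constraint are handled separately: a violating world has $\phi = 1$, hence $\pi = 0$, matching $p_{\mathcal{M}} = 0$; one checks the strict inequality holds on both sides in each sub-case.

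Next I would prove the inference equivalence \eqref{eqInferenceEquivalence}. The key observation is that both $\vdash_{\textit{MAP}}$ and $\possent$, when conditioned on evidence $E$, select a set of ``best'' worlds among those satisfying $E$ and then entail everything true in all of them. Concretely, $(\mathcal{M},E)\vdash_{\textit{MAP}}\alpha$ holds iff every $p_{\mathcal{M}}$-maximal world satisfying $E$ also satisfies $\alpha$. On the possibilistic side, $(\Theta_{\mathcal{M}},E)\possent\alpha$ means $(\Theta_{\mathcal{M}}\cup\{(e,1)\mid e\in E\})_{\textit{con}} \models \alpha$; I would argue that the least specific model of $\Theta_{\mathcal{M}}\cup\{(e,1)\mid e\in E\}$ assigns positive possibility exactly to the $\pi$-maximal worlds among those satisfying $E$ (and $0$ elsewhere), using the fact that adding $(e,1)$ forces $\pi(\omega)=0$ whenever $\omega\not\models E$, and the drowning/consistency-level mechanism normalizes the remainder. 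Since by the ordering claim the $\pi$-maximal and $p_{\mathcal{M}}$-maximal worlds among those satisfying $E$ coincide (note $\max(\mathcal{M},E)$ is nonempty whenever $E$ is consistent with the hard constraints; the degenerate case must be noted), both inference relations quantify over the same set of worlds, giving the equivalence.

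The step I expect to be the main obstacle is the careful treatment of the conditioning argument on the possibilistic side: one must verify that $\possent$ with evidence $\{(e,1)\mid e\in E\}$ really does restrict attention to the $\pi$-best models of $E$, which requires checking that $\textit{con}(\Theta_{\mathcal{M}}\cup\{(e,1)\mid e\in E\})$ picks out precisely the cut level corresponding to $\min_{\omega\models E}\phi(\omega)$, and that no spurious formulas survive or get drowned in a way that changes the entailed conclusions. This interacts delicately with the fact that $\Theta_{\mathcal{M}}$ contains a formula $(\bigvee X^*, \lambda)$ for \emph{every} subset $X$, so its cuts are rich enough to pin down the exact ranking --- a point where invoking Corollary~\ref{propCorrespondencePenalties} (relating cut-level entailment to penalty thresholds) cleanly, rather than reasoning about individual clauses, is the right move. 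Handling $\lambda = 1$ (hard constraints) and the case where $E$ itself is inconsistent with the hard constraints as separate, explicitly stated sub-cases will also be needed for a fully rigorous argument.
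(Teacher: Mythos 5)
Your proposal is correct and follows exactly the route the paper intends: the paper states this corollary without proof, treating the ordering claim as immediate from Proposition~\ref{prop1} (via the monotone chain $\pi(\omega)=1-\phi(\omega)$, $\phi\leftrightarrow\textit{pen}\leftrightarrow\textit{sat}\leftrightarrow p_{\mathcal{M}}$) and the inference equivalence as the standard fact that both $\vdash_{\textit{MAP}}$ under $E$ and $\possent$ with $\{(e,1)\mid e\in E\}$ entail precisely what holds in the maximal worlds among the models of $E$, which coincide by the ordering claim. One small slip that does not affect the argument: the least specific model of $\Theta_{\mathcal{M}}\cup\{(e,1)\mid e\in E\}$ is not $0$ on non-maximal models of $E$ --- it agrees with $\pi$ on all models of $E$ and is $0$ only off $E$ --- but what you actually need (and correctly identify as the point to verify) is that the consistency-level cut is modelled exactly by the $\pi$-maximal worlds satisfying $E$, i.e.\ the level corresponding to $\min_{\omega\models E}\phi(\omega)$.
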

\begin{example}\label{exTransformationBasic}
Consider the MLN $\mathcal{M}$ containing the following formulas:
\begin{align*}
5:&\,\, a \rightarrow x&
5:&\,\, a \rightarrow y&
10:&\,\, a\wedge b \rightarrow \neg y
\end{align*}
Then $\Theta_{\mathcal{M}}$ contains the following formulas:
\begin{align*}
\lambda_5: &\,\, a \rightarrow x &
\lambda_5: &\,\, a \rightarrow y\\
\lambda_{10}: &\,\, a\wedge b \rightarrow \neg y&
\lambda_{10}: &\,\, a \rightarrow x\vee y\\
\lambda_{15}: &\,\, a\wedge b \rightarrow x\vee \neg y
\end{align*}
It can be verified that:
\begin{align*}
&(\Theta_{\mathcal{M}},\{a\}) \possent x\wedge y&
&(\Theta_{\mathcal{M}},\{a,b\}) \possent x\wedge \neg y
\end{align*}
\end{example}
An important drawback of the transformation to possibilistic logic is that the number of formulas in $\Theta_{\mathcal{M}}$ is exponential in $|\mathcal{M}|$. This makes the transformation inefficient, and moreover limits the interpretability of the possibilistic logic theory. In general, the exponential size of $\Theta_{\mathcal{M}}$ cannot be avoided if we want \eqref{eqInferenceEquivalence} to hold for any $E$ and $\alpha$. However, more compact theories can be found if we focus on specific types of evidence. Sections \ref{secSelectiveDrowning} and \ref{section:defaultEncoding} introduce two practical methods to accomplish this.

\subsection{SELECTIVELY AVOIDING DROWNING}\label{secSelectiveDrowning}
In many applications, we are only interested in particular types of evidence sets $E$. For example, we may only be interested in evidence sets that contain at most $k$ literals, or in evidence sets that only contain positive literals. In such cases, we can often derive a more compact possibilistic logic theory $\Theta^{\mathcal{E}}$ as follows. Let $\mathcal{E}$ be the set of  evidence sets that we wish to consider, where each $E \in \mathcal{E}$ is a set of ground formulas.  Given $E\in \mathcal{E}$ we write $\mathcal{S}_E$ for the set of all minimal subsets $\{F_1,...,F_l\}$ of $\mathcal{M}^*_E = \{F \,|\, F\in \mathcal{M}^*, \textit{pen}(\mathcal{M}, \neg F) < \textit{pen}(\mathcal{M}, E)\}$ s.t.
\begin{align}\label{eqFormulasDrowningX}
\textit{pen}(\mathcal{M}, \bigwedge E \wedge \neg F_1 \wedge ... \wedge \neg F_l) > \textit{pen}(\mathcal{M}, E)
\end{align}
The following transformation constructs a possibilistic logic theory that correctly captures MAP inference for evidence sets in $\mathcal{E}$. The basic intuition is that we want to weaken the formulas in $\mathcal{M}^*$ just enough to ensure that the resulting certainty level prevents them from drowning when the evidence $E$ becomes available.
\begin{definition}\label{transformationSelectedEvidence}
Given a ground MLN $\mathcal{M}$ and a set of evidence sets $\mathcal{E}$, we define the possibilistic logic theory $\Theta^{\mathcal{E}}_{\mathcal{M}}$ as follows:
\begin{align}
&\{ (F_1, \phi(\neg F_1)) \,|\, F_1 \in \mathcal{M}^*\}\label{eqFormulasDrowningA0}\\
& \cup \{(\neg \bigwedge E\vee \bigvee Z , \phi(\bigwedge E \wedge \neg \bigwedge Z))\,|\, Z \in \mathcal{S}_E, \label{eqFormulasDrowningA}\\
& \quad E\in \mathcal{E}\} \cup \{(\neg \bigwedge E, \phi(\bigwedge E))\,|\, E\in\mathcal{E}\} \label{eqFormulasDrowningB}
\end{align}
\end{definition}
If $\mathcal{M}$ is clear from the context, we will omit the subscript in $\Theta^{\mathcal{E}}_{\mathcal{M}}$.
The formulas in \eqref{eqFormulasDrowningA0} are the direct counterpart of the MLN.
Intuitively, there are two reasons why these formulas are not sufficient. First, due to the drowning effect, formulas $F$ such that $\textit{pen}(\mathcal{M}, \neg F) < \textit{pen}(\mathcal{M}, E)$ will be ignored under the evidence $E$. In such cases we should look at minimal ways to weaken these formulas such that the certainty level of the resulting formula is sufficient to avoid drowning under the evidence $E$. This is accomplished by adding the formulas in \eqref{eqFormulasDrowningA}. Second, as $\Theta^{\mathcal{E}}$ contains less information than $\Theta_{\mathcal{M}}$, we need to ensure that the consistency level for $\Theta^{\mathcal{E}}$ is never lower than the consistency level for $\Theta_{\mathcal{M}}$, given an evidence set $E \in \mathcal{E}$. To this end, $\Theta^{\mathcal{E}}$ includes the formulas in \eqref{eqFormulasDrowningB}. The following example illustrates why these formulas are needed.
  
\begin{example}\label{exSelectiveDrowningA}
Consider the following MLN $\mathcal{M}$:
\begin{align*}
3:&\,\, u &
2:&\,\, a &
10:&\,\, (a\vee b) \wedge (u\vee v) \rightarrow \neg x  \\
2:&\,\, b &
1:&\,\, v
\end{align*}
and let $\mathcal{E} = \{ \{x\} \}$, i.e.\ the only evidence set in which we are interested is $\{x\}$. It holds that
\begin{align}\label{eqSEexample4}
\mathcal{S}_E = \{\{a, u\}, \{b, u\}, \{a, v\}, \{b, v\}\}
\end{align}
and $\Theta^{\mathcal{E}} = \Theta \cup \Psi \cup \Gamma$, where:
\begin{align*}
\Theta = \{& (u,\lambda_3), (a,\lambda_2), ((a\vee b) \wedge (u\vee v) \rightarrow \neg x, \lambda_{10}),\\
& (b,\lambda_2), (v,\lambda_1) \}\\
\Psi = \{& (a\vee u \vee \neg x, \lambda_6), (b\vee u \vee \neg x, \lambda_6),\\
& (a\vee v \vee \neg x, \lambda_5),(b\vee v \vee \neg x, \lambda_5)\\
\Gamma = \{& (\neg x, \lambda_4)\}
\end{align*}
It is easy to verify that $(\Theta\cup \Psi,\{x\})\possent u$ whereas $(\mathcal{M},\{x\}) \not \vdash_{\textit{MAP}} u$ and $(\Theta\cup \Psi \cup \Gamma,\{x\})\not \possent u$.
\end{example}
We now prove the correctness of Transformation \ref{transformationSelectedEvidence}.
\begin{proposition}
For any formula $\alpha$ and any evidence set $E\in\mathcal{E}$, it holds that $(\Theta_{\mathcal{M}},E)\possent \alpha$ iff $(\Theta^{\mathcal{E}},E)\possent \alpha$.
\end{proposition}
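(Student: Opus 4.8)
The plan is to reduce both sides of the claimed equivalence to a statement about which possible worlds maximise a least specific possibility distribution. First I would record the standard fact that, for any satisfiable possibilistic theory $\Delta$ with least specific model $\pi^*$, one has $\Delta\possent\beta$ iff every $\omega$ attaining $\max_{\omega'}\pi^*(\omega')$ satisfies $\beta$; this holds because the classical models of the cut $\Delta_{\textit{con}(\Delta)}$ are exactly those maximising worlds. Applying this to $\Theta_{\mathcal{M}}\cup\{(e,1)\mid e\in E\}$, whose least specific model is, by Proposition~\ref{prop1}, the map $\omega\mapsto 1-\phi(\omega)$ on the models of $\bigwedge E$ and $0$ elsewhere, the maximising worlds are exactly $\max(\mathcal{M},E)$; hence, by Corollary~\ref{propCorrespondence}, $(\Theta_{\mathcal{M}},E)\possent\alpha$ holds iff every $\omega\in\max(\mathcal{M},E)$ satisfies $\alpha$. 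So it suffices to prove that the maximising worlds of the least specific model $\pi'$ of $\Theta^{\mathcal{E}}\cup\{(e,1)\mid e\in E\}$ are again exactly $\max(\mathcal{M},E)$.

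Write $g(\omega)=\max\{\mu\mid(\beta,\mu)\in\Theta^{\mathcal{E}},\ \omega\not\models\beta\}$, so that $\pi'(\omega)=1-g(\omega)$ for $\omega\models\bigwedge E$ and $\pi'(\omega)=0$ otherwise. A preliminary observation, checked formula-by-formula via Corollary~\ref{propCorrespondencePenalties}, is that every formula of $\Theta^{\mathcal{E}}$ has the form $(\beta,\phi(\neg\beta))$: this is immediate for \eqref{eqFormulasDrowningA0} and \eqref{eqFormulasDrowningB}, and for \eqref{eqFormulasDrowningA} it uses $\neg(\neg\bigwedge E\vee\bigvee Z)=\bigwedge E\wedge\bigwedge_{F\in Z}\neg F$. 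Consequently $\pi_{\mathcal{M}}$ is a model of $\Theta^{\mathcal{E}}$, so $g(\omega)\leq\phi(\omega)$ whenever $\omega\models\bigwedge E$. For the inclusion $\max(\mathcal{M},E)\subseteq\{\,\omega\mid\pi'(\omega)\text{ maximal}\,\}$, note that the formula $(\neg\bigwedge E,\phi(\bigwedge E))$ from \eqref{eqFormulasDrowningB} is violated by every model of $\bigwedge E$, so $g(\omega)\geq\phi(\bigwedge E)$ for such $\omega$; since $\phi(\omega)=\phi(\bigwedge E)$ holds precisely for $\omega\in\max(\mathcal{M},E)$ among the models of $\bigwedge E$, we get $\min_{\omega\models\bigwedge E}g(\omega)=\phi(\bigwedge E)$ and every $\omega\in\max(\mathcal{M},E)$ satisfies $g(\omega)=\phi(\bigwedge E)$, hence maximises $\pi'$.

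The converse inclusion is the main work. Fix $\omega\models\bigwedge E$ with $\omega\notin\max(\mathcal{M},E)$; I must show $g(\omega)>\phi(\bigwedge E)$, i.e.\ that $\omega$ violates some formula of $\Theta^{\mathcal{E}}$ of weight strictly above $\phi(\bigwedge E)$. If $\omega$ violates a hard constraint $F$ of $\mathcal{M}$, then $\phi(\neg F)=1$ and $(F,1)\in\Theta^{\mathcal{E}}$ by \eqref{eqFormulasDrowningA0}, so we are done; hence assume $\omega$ satisfies the hard constraints and $\textit{pen}(\mathcal{M},\omega)>\textit{pen}(\mathcal{M},E)$. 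Let $V=\{F\in\mathcal{M}^*\mid\omega\not\models F\}$. The key calculation is that $\omega$ is a cheapest world satisfying $\bigwedge E\wedge\bigwedge_{F\in V}\neg F$: any such world violates at least the formulas in $V$, so by positivity of the weights its penalty is at least $\textit{pen}(\mathcal{M},\omega)$; therefore $\textit{pen}(\mathcal{M},\bigwedge E\wedge\bigwedge_{F\in V}\neg F)=\textit{pen}(\mathcal{M},\omega)>\textit{pen}(\mathcal{M},E)$. Now either some $F\in V$ satisfies $\textit{pen}(\mathcal{M},\neg F)>\textit{pen}(\mathcal{M},E)$, whence $\omega$ violates the formula $(F,\phi(\neg F))$ of \eqref{eqFormulasDrowningA0} with weight exceeding $\phi(\bigwedge E)$; or $V\subseteq\mathcal{M}^*_E$, and then $V$ itself satisfies inequality \eqref{eqFormulasDrowningX}, so $V$ contains some minimal such subset $Z\in\mathcal{S}_E$, and since $\omega\models\bigwedge E$ and $\omega$ violates every $F\in Z$, it violates the formula $(\neg\bigwedge E\vee\bigvee Z,\phi(\bigwedge E\wedge\bigwedge_{F\in Z}\neg F))$ of \eqref{eqFormulasDrowningA}, whose weight is strictly above $\phi(\bigwedge E)$ by \eqref{eqFormulasDrowningX}. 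In all cases $g(\omega)>\phi(\bigwedge E)$, so $\omega$ does not maximise $\pi'$; combining the two inclusions with the reduction of the first paragraph completes the proof.

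The step I expect to be least routine is the dichotomy at the end of the previous paragraph, together with the accompanying claim $\textit{con}(\Theta^{\mathcal{E}}\cup\{(e,1)\mid e\in E\})=\textit{con}(\Theta_{\mathcal{M}}\cup\{(e,1)\mid e\in E\})$ that is implicitly used when equating the two threshold levels to $\phi(\bigwedge E)$: one must check that formulas $F$ whose penalty $\textit{pen}(\mathcal{M},\neg F)$ sits exactly at the drowning threshold $\textit{pen}(\mathcal{M},E)$ are handled correctly, i.e.\ that although such formulas drown once $E$ is asserted, the appropriately weakened clauses in \eqref{eqFormulasDrowningA} recover everything they would have entailed — this is precisely what the definitions of $\mathcal{M}^*_E$ and $\mathcal{S}_E$ are designed to ensure. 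A secondary, more mechanical point is to verify that the degenerate cases where $E$ is inconsistent with the hard constraints make both $(\Theta_{\mathcal{M}},E)\possent\alpha$ and $(\Theta^{\mathcal{E}},E)\possent\alpha$ trivially true.
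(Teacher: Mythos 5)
Your argument is, at bottom, the paper's own proof recast semantically. Where you compare the maximising worlds of the two least specific models, the paper compares the cuts $A=(\Theta_{\mathcal{M}}\cup E)_{\textit{con}}$ and $A^E=(\Theta^{\mathcal{E}}\cup E)_{\textit{con}}$; your ``preliminary observation'' that every formula of $\Theta^{\mathcal{E}}$ has the form $(\beta,\phi(\neg\beta))$ is the paper's appeal to Corollary \ref{propCorrespondencePenalties} to get $A\models A^E$; your lower bound $g(\omega)\geq\phi(\bigwedge E)$ via $(\neg\bigwedge E,\phi(\bigwedge E))$ is the paper's use of \eqref{eqFormulasDrowningB} to pin the consistency level; and your final case split on the violation set $V$ is exactly the paper's dichotomy for a clause $G_1\vee\dots\vee G_s$ of $A$. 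The two viewpoints are interchangeable via Proposition \ref{prop1}, so nothing substantive changes; your explicit remark that $\omega$ is a cheapest world satisfying $\bigwedge E\wedge\bigwedge_{F\in V}\neg F$ is a nice concrete rendering of the paper's ``a fortiori'' step.

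The step you flag as ``least routine'' is, however, a genuine gap, and you should not expect it to close as written — the paper's proof asserts the same dichotomy without argument and has the identical hole. The branches ``some $F\in V$ has $\textit{pen}(\mathcal{M},\neg F)>\textit{pen}(\mathcal{M},E)$'' and ``$V\subseteq\mathcal{M}^*_E$'' are not exhaustive, because $\mathcal{M}^*_E$ is defined by the \emph{strict} inequality $\textit{pen}(\mathcal{M},\neg F)<\textit{pen}(\mathcal{M},E)$: a violated formula sitting exactly at the threshold falls into neither branch, and discarding it from $V$ may destroy property \eqref{eqFormulasDrowningX}. Concretely, take $\mathcal{M}=\{(a,1),(b,1)\}$ and $E=\{\neg a\}$, so that $\textit{pen}(\mathcal{M},E)=\textit{pen}(\mathcal{M},\neg a)=\textit{pen}(\mathcal{M},\neg b)=1$. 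Then $\mathcal{M}^*_E=\emptyset$, $\mathcal{S}_E=\emptyset$, and $\Theta^{\mathcal{E}}=\{(a,\lambda_1),(b,\lambda_1)\}$, so $(\Theta^{\mathcal{E}},E)\not\possent b$ even though $(\Theta_{\mathcal{M}},E)\possent b$ via the clause $(a\vee b,\lambda_2)$; the world $\{\neg a,\neg b\}$ violates only formulas of weight $\lambda_1=\phi(\bigwedge E)$, so your converse inclusion fails for it. The statement (and both proofs) are repaired by reading the definition of $\mathcal{M}^*_E$ with $\leq$ in place of $<$, after which ``not in the first branch'' really does imply $V\subseteq\mathcal{M}^*_E$ and the rest of your argument goes through unchanged.
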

\begin{proof}
Let us introduce the following notation:
\begin{align*}
\lambda_E &=\textit{con}(\Theta_{\mathcal{M}} \cup \{(e,1)\,|\, e\in E\})\\
\lambda_E^{\mathcal{E}} &=\textit{con}(\Theta^{\mathcal{E}} \cup \{(e,1)\,|\, e\in E\})\\
A&=(\Theta_{\mathcal{M}} \cup \{(e,1)\,|\, e\in E\})_{\lambda_E}\\
A^E&=(\Theta^{\mathcal{E}} \cup \{(e,1)\,|\, e\in E\})_{\lambda^{\mathcal{E}}_E}
\end{align*}
We need to show that $A$ is equivalent to $A^E$, for any $E\in\mathcal{E}$.

By Corollary \ref{propCorrespondencePenalties}, we know that every formula $(\alpha,\lambda)$ in $\Theta^{\mathcal{E}}_{\mathcal{M}}$ is entailed by  $\Theta_{\mathcal{M}}$, hence $\lambda_E^{\mathcal{E}}\leq \lambda_E$. Since $\lambda_E$ is the smallest certainty level from $\Theta_{\mathcal{M}}$ which is strictly higher than $\phi(E)$, it follows that $A^{E}$ contains every formula which appears in $\Theta^{\mathcal{E}}$ with a weight that is strictly higher than $\phi(E)$. Moreover, since $\Theta^{\mathcal{E}}$ by construction contains $(\neg \bigwedge E, \phi(\bigwedge E))$, we find that $A^E$ can only contain such formulas:
\begin{align}\label{eqLemmaConsistencyLevelEquivalence}
A^E&= E \cup \{\alpha \,|\, (\alpha,\lambda)\in \Theta^{\mathcal{E}}, \lambda > \phi(E)\}
\end{align}
It follows that $A \models A^{E}$.

Let $G_1 \vee ... \vee G_s$ be a formula from $A$. From Corollary \ref{propCorrespondencePenalties} we  know that:
$$
\textit{pen}(\mathcal{M},\neg G_1 \wedge ... \wedge \neg G_s) > \textit{pen}(\mathcal{M}, E)
$$
and a fortiori
$$
\textit{pen}(\mathcal{M}, E \wedge \neg G_1 \wedge ... \wedge \neg G_s) > \textit{pen}(\mathcal{M}, E)
$$
This means that for any formula $G_1 \vee ... \vee G_s$ in $A$, either $\textit{pen}(\mathcal{M},\neg G_i) > \textit{pen}(\mathcal{M}, E)$ for some $i$  or  $\mathcal{S}_E$ contains a subset $\{H_1,..., H_r\}$ of 
$\{G_1,...,G_s\}$. Then $\Theta^{\mathcal{E}}$ contains either $G_i$ or the formula $\neg E \vee H_1 ... \vee H_r$ with a weight which is strictly higher than $\phi(E)$ and thus either $G_i$ or $\neg E \vee H_1 ... \vee H_r$ belongs to $A^{E}$. In both cases we find $A^E \models G_1 \vee ... \vee G_s$. We conclude $A^{E} \models A$.
\end{proof}
An alternative, which would make the approach in this section closer to the standard encoding in \eqref{eqStandardEncoding}, is to define $\mathcal{S}'_E$ as the set of minimal subsets $\{F_1,...,F_l\}$ of $\mathcal{M}^*_E$ such that
\begin{align}\label{eqFormulasDrowningBisX}
\textit{pen}(\mathcal{M},\neg F_1 \wedge ... \wedge \neg F_l) > \textit{pen}(\mathcal{M}, E)
\end{align}
and then replace the formulas in \eqref{eqFormulasDrowningA} by
\begin{align}\label{eqFormulasDrowningBisA}
\{(\bigvee Z , \phi( \neg \bigwedge Z))\,|\, Z \in \mathcal{S}'_E\}
\end{align}
The advantage of \eqref{eqFormulasDrowningA}, however, is that we can expect many of the sets in $\mathcal{S}_E$ to be singletons. To see why this is the case, first note that for each world $\omega$ in $\max(\mathcal{M},E)$, the set of formulas $\mathcal{Y} \subseteq \mathcal{M}^*$ satisfied by $\omega$ is such that $\textit{pen}(\mathcal{M},\neg \bigvee(\mathcal{M}^*\setminus \mathcal{Y}))$ is minimal among all sets $\mathcal{Y}' \subseteq \mathcal{M}^*$ for which $E \wedge \bigwedge \mathcal{Y}'$ is consistent. Let us write $\textit{Cons}_E(\mathcal{M})$ for the set of all these maximally consistent subsets of $\mathcal{M}^*$. Note that  $\max(\mathcal{M},E)=\sem{\bigvee \{ \bigwedge \mathcal{Y} \,|\, \mathcal{Y}\in \textit{Cons}_E(\mathcal{M})\}}$.

\begin{lemma}\label{LemmaSelectiveMAPStates}
For a set of formulas $\{F_1,...,F_l\} \subseteq \mathcal{M}^*$ it holds that $\textit{pen}(\mathcal{M}, E \wedge \neg F_1 \wedge ... \wedge \neg F_l) > \textit{pen}(\mathcal{M}, E)$ iff $\{F_1,...,F_l\} \cap \mathcal{Y} \neq \emptyset$ for every $\mathcal{Y}$ in $\textit{Cons}_E(\mathcal{M})$.
\end{lemma}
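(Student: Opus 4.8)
The plan is to unpack the definition of $\textit{pen}$ in terms of $\max(\mathcal{M},\cdot)$ and to exploit the characterization $\max(\mathcal{M},E)=\sem{\bigvee\{\bigwedge\mathcal{Y}\,|\,\mathcal{Y}\in\textit{Cons}_E(\mathcal{M})\}}$ that is recorded just before the lemma. The key observation is that, for a ground MLN, adding a conjunction of literals/formulas $\neg F_1\wedge\dots\wedge\neg F_l$ to the evidence $E$ strictly increases the penalty precisely when this conjunction kills every current MAP world of $(\mathcal{M},E)$: if some $\omega\in\max(\mathcal{M},E)$ still satisfies $E\wedge\neg F_1\wedge\dots\wedge\neg F_l$, then that $\omega$ witnesses $\textit{sat}(\mathcal{M},E\wedge\neg F_1\wedge\dots\wedge\neg F_l)=\textit{sat}(\mathcal{M},E)$, hence equal penalties; conversely, if no MAP world survives, then every world satisfying the strengthened evidence has strictly smaller weighted-sat value, so the penalty strictly increases.

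Concretely, I would argue both directions via the contrapositive. For the ``only if'' direction, suppose there is some $\mathcal{Y}\in\textit{Cons}_E(\mathcal{M})$ with $\{F_1,\dots,F_l\}\cap\mathcal{Y}=\emptyset$. Pick a world $\omega\in\max(\mathcal{M},E)$ whose set of satisfied MLN formulas is exactly $\mathcal{Y}$ (such a world exists because $\max(\mathcal{M},E)=\sem{\bigvee\{\bigwedge\mathcal{Y}'\,|\,\mathcal{Y}'\in\textit{Cons}_E(\mathcal{M})\}}$ and, by the remark preceding the lemma, each $\mathcal{Y}'\in\textit{Cons}_E(\mathcal{M})$ is realized by some MAP world). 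Since $F_i\notin\mathcal{Y}$ for all $i$, $\omega$ falsifies every $F_i$, i.e.\ $\omega\models E\wedge\neg F_1\wedge\dots\wedge\neg F_l$; and $\omega$ being a MAP world of $(\mathcal{M},E)$ gives $\textit{sat}(\mathcal{M},E\wedge\neg F_1\wedge\dots\wedge\neg F_l)\geq\textit{sat}(\mathcal{M},E)$, while the reverse inequality is immediate because strengthening the evidence can only decrease $\textit{sat}$. Hence the penalties are equal, contradicting the hypothesis. For the ``if'' direction, suppose $\{F_1,\dots,F_l\}$ meets every $\mathcal{Y}\in\textit{Cons}_E(\mathcal{M})$. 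Then no world in $\max(\mathcal{M},E)$ can satisfy $E\wedge\neg F_1\wedge\dots\wedge\neg F_l$, since such a world would realize some $\mathcal{Y}\in\textit{Cons}_E(\mathcal{M})$ disjoint from $\{F_1,\dots,F_l\}$. Therefore every world satisfying the strengthened evidence lies strictly below the MAP value, so $\textit{sat}(\mathcal{M},E\wedge\neg F_1\wedge\dots\wedge\neg F_l)<\textit{sat}(\mathcal{M},E)$ and the penalty strictly increases. (Here one should also note the convention about hard constraints: if $E$ itself violates a hard constraint both penalties are $+\infty$ and the statement is vacuous or handled separately; otherwise the argument above goes through with finite weighted-sat values.)

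The main obstacle I anticipate is making the passage ``each $\mathcal{Y}\in\textit{Cons}_E(\mathcal{M})$ is realized by a MAP world whose set of satisfied MLN formulas is \emph{exactly} $\mathcal{Y}$'' fully rigorous. The subtlety is that a MAP world might satisfy a superset of some maximally consistent $\mathcal{Y}$, or the correspondence between MAP worlds and elements of $\textit{Cons}_E(\mathcal{M})$ might not be as clean as the notation suggests. I would address this by appealing to the explicit claim already made in the text --- that for $\omega\in\max(\mathcal{M},E)$ the set $\mathcal{Y}$ of satisfied formulas minimizes $\textit{pen}(\mathcal{M},\neg\bigvee(\mathcal{M}^*\setminus\mathcal{Y}))$ over consistent-with-$E$ choices, and that $\max(\mathcal{M},E)=\sem{\bigvee\{\bigwedge\mathcal{Y}\,|\,\mathcal{Y}\in\textit{Cons}_E(\mathcal{M})\}}$ --- so that for every $\mathcal{Y}\in\textit{Cons}_E(\mathcal{M})$ the formula $\bigwedge\mathcal{Y}$ is consistent with $E$ and any world satisfying $E\wedge\bigwedge\mathcal{Y}$ that additionally falsifies every formula outside $\mathcal{Y}$ is a MAP world; the existence of such a world follows from maximality of $\mathcal{Y}$. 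Once this realizability fact is in hand, the two contrapositive arguments above are routine, relying only on monotonicity of $\textit{sat}(\mathcal{M},\cdot)$ under strengthening of the evidence and the definition $\textit{pen}(\mathcal{M},\cdot)=\textit{sat}(\mathcal{M},\emptyset)-\textit{sat}(\mathcal{M},\cdot)$.
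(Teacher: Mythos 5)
Your proposal is correct and follows essentially the same route as the paper: both reduce the penalty comparison to whether any world in $\max(\mathcal{M},E)$ (characterized via $\textit{Cons}_E(\mathcal{M})$) survives the added conjuncts $\neg F_1\wedge\dots\wedge\neg F_l$, the paper phrasing this as inconsistency of $\bigvee\{\bigwedge\mathcal{Y}\}\wedge\neg F_1\wedge\dots\wedge\neg F_l$ while you spell out the two directions via $\textit{sat}$ values. The realizability point you flag as a potential obstacle is immediate from the paper's definition of $\textit{Cons}_E(\mathcal{M})$ as the sets of formulas satisfied by MAP worlds, so your argument closes without further work.
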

\begin{corollary}\label{corME}
Let $\textit{Cons}_E(\mathcal{M}) = \{\mathcal{Y}_1,...,\mathcal{Y}_s\}$. It holds that $\mathcal{S}_E$ consists of the subset-minimal elements of $\{ \{y_1,...,y_s\} \,|\, y_1\in \mathcal{Y}_1 \cap \mathcal{M}^*_E,...,y_s \in \mathcal{Y}_s\cap \mathcal{M}^*_E \}$.
\end{corollary}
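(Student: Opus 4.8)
The plan is to read $\mathcal{S}_E$ off directly from Lemma~\ref{LemmaSelectiveMAPStates} and then invoke the standard correspondence between the minimal hitting sets (transversals) of a finite set system and the minimal ``selection sets'' obtained by picking one element from each member of the system. Throughout, write $B_i = \mathcal{Y}_i \cap \mathcal{M}^*_E$ for $i = 1,\dots,s$, and call a set $Z \subseteq \mathcal{M}^*_E$ a \emph{transversal} of $\{B_1,\dots,B_s\}$ if $Z \cap B_i \neq \emptyset$ for every $i$. The set in the statement of the corollary is then exactly $\mathcal{T} := \{\{y_1,\dots,y_s\} \mid y_1 \in B_1,\dots,y_s \in B_s\}$, and we must show $\mathcal{S}_E$ equals the subset-minimal elements of $\mathcal{T}$.

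First I would reformulate the defining condition of $\mathcal{S}_E$. By definition, $\mathcal{S}_E$ consists of the subset-minimal $Z \subseteq \mathcal{M}^*_E$ satisfying \eqref{eqFormulasDrowningX}, i.e.\ $\textit{pen}(\mathcal{M}, \bigwedge E \wedge \bigwedge_{F \in Z} \neg F) > \textit{pen}(\mathcal{M}, E)$. By Lemma~\ref{LemmaSelectiveMAPStates}, for any $Z \subseteq \mathcal{M}^*$ this holds exactly when $Z \cap \mathcal{Y}_i \neq \emptyset$ for every $i$. Since here $Z \subseteq \mathcal{M}^*_E$, we have $Z \cap \mathcal{Y}_i = Z \cap (\mathcal{Y}_i \cap \mathcal{M}^*_E) = Z \cap B_i$, so $Z$ satisfies \eqref{eqFormulasDrowningX} iff $Z$ is a transversal of $\{B_1,\dots,B_s\}$. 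Hence $\mathcal{S}_E$ is precisely the family of subset-minimal transversals of $\{B_1,\dots,B_s\}$.

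Next I would prove that the minimal transversals of $\{B_1,\dots,B_s\}$ coincide with the subset-minimal elements of $\mathcal{T}$. Two observations suffice. (i)~Every $T \in \mathcal{T}$ is a transversal, since it meets $B_i$ in $y_i$; note also $T \subseteq \mathcal{M}^*_E$. (ii)~Conversely, every transversal $Z$ contains some $T \in \mathcal{T}$: for each $i$ pick $y_i \in Z \cap B_i$ and set $T = \{y_1,\dots,y_s\}$ (possibly with fewer than $s$ distinct elements); then $T \in \mathcal{T}$ and $T \subseteq Z$. Together, (i) and (ii) say that the family of transversals is exactly the upward closure under $\subseteq$ of $\mathcal{T}$, and for any such situation a set is a minimal transversal iff it is a minimal element of $\mathcal{T}$ (if $T \in \mathcal{T}$ were not minimal among transversals, some transversal $Z' \subsetneq T$ would by (ii) contain a $T' \in \mathcal{T}$ with $T' \subsetneq T$, contradicting minimality of $T$ in $\mathcal{T}$; the converse direction is immediate). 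Combining this with the previous paragraph yields the corollary. (The degenerate case where some $B_i = \emptyset$ is consistent with both descriptions: then there are no transversals and $\mathcal{T} = \emptyset$, so $\mathcal{S}_E = \emptyset$.)

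I do not expect a genuine obstacle here: once Lemma~\ref{LemmaSelectiveMAPStates} is available, the only content is the routine transversal/selection-set bookkeeping, and the single point requiring a moment's care is the passage from $\mathcal{Y}_i$ to $B_i = \mathcal{Y}_i \cap \mathcal{M}^*_E$ in the first step, which is legitimate precisely because the candidate sets in the definition of $\mathcal{S}_E$ are constrained to lie within $\mathcal{M}^*_E$.
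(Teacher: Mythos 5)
Your proof is correct and follows the same route the paper intends: the paper states this corollary without proof as an immediate consequence of Lemma~\ref{LemmaSelectiveMAPStates}, and your argument---translating condition \eqref{eqFormulasDrowningX} via the lemma into a hitting-set condition on the sets $\mathcal{Y}_i \cap \mathcal{M}^*_E$ and then identifying minimal transversals with minimal selection sets---is exactly the omitted bookkeeping. The one point needing care (that intersecting $\mathcal{Y}_i$ reduces to intersecting $\mathcal{Y}_i \cap \mathcal{M}^*_E$ because candidates are constrained to $\mathcal{M}^*_E$) is handled correctly.
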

\begin{example}
Consider again the MLN $\mathcal{M}$ from Example \ref{exSelectiveDrowningA} and let $E=\{x\}$. It holds that $\textit{Cons}_E(\mathcal{M}) = \{\mathcal{Y}_1,\mathcal{Y}_2\}$, where
\begin{align*}
\mathcal{Y}_1 &= \{(a\vee b) \wedge (u\vee v) \rightarrow \neg x, a,b\}\\
\mathcal{Y}_2 &= \{(a\vee b) \wedge (u\vee v) \rightarrow \neg x, u,v\}\\
\mathcal{M}^*_E &= \{a,b,u,v\}
\end{align*}
From Corollary \ref{corME},it follows that $\mathcal{S}_E$ is given by \eqref{eqSEexample4}.
\end{example}
In practice, $\textit{Cons}_E(\mathcal{M})$ will often contain a single element, in which case all the elements of $\mathcal{S}_E$ will be singletons.

\subsection{MAP INFERENCE AS DEFAULT REASONING}\label{section:defaultEncoding}
A large number of approaches has been proposed for reasoning with a set of default rules of the form ``if $\alpha$ then typically $\beta$'' \cite{KLM,pearl1990system,geffner1992conditional}. At the core, each of the proposed semantics corresponds to the intuition that a set of default rules imposes a preference order on possible worlds, where ``if $\alpha$ then $\beta$'' means that $\beta$ is true in the most preferred models of $\alpha$. The approaches from \cite{KLM} and  \cite{pearl1990system} can be elegantly captured in possibilistic logic  \cite{benferhat1997nonmonotonic}, by interpreting the default rule as the constraint $\Pi(\alpha \wedge \beta) > \Pi(\alpha \wedge \neg \beta)$. In Markov logic, the same constraint on the ordering of possible worlds can be expressed by imposing the constraint $(\mathcal{M},\alpha) \mapent \beta$. In other words, we can view the MAP consequences of an MLN as a set of default rules, and encode these default rules in possibilistic logic. The following transformation is based on this idea.
\begin{definition}\label{transformationDefaults}
Given a ground MLN $\mathcal{M}$ and a positive integer $k$, we construct a possibilistic logic theory $\Theta^k_{\mathcal{M}}$ as follows:
\begin{itemize}
\item For each hard rule $F$ from $\mathcal{M}$, add $(F,1)$ to $\Theta^k_{\mathcal{M}}$.
\item For each set of literals $E$ such that $0 \leq |E| \leq k$, let $X = \{x \,|\, (\mathcal{M},E) \mapent x\}$ be the set of literals that are true in all the most plausible models of $E$. 
Unless there is a literal $y \in E$ such that $\bigwedge (E \setminus \{y\}) \mapent y$, add 
$$\left(\bigwedge E \rightarrow \bigwedge X, \lambda_E \right)$$
 to $\Theta^k_{\mathcal{M}}$, where $\lambda_E =  \phi(\bigwedge E)$. If $\textit{pen}(\mathcal{M},E)>\textit{pen}(\mathcal{M},\emptyset)$, add also
\begin{align}\label{eqBlockingRuleDefaults}
(\neg (\bigwedge E \wedge \bigwedge X), \lambda_E')
\end{align}
where $\lambda_E'$ is the certainty level just below $\lambda_E$ in $\Theta^k_{\mathcal{M}}$, i.e.\ $\lambda_{E'}=\max\{\lambda_F \,|\, \lambda_F<\lambda_E, |F|\leq k\}$.
\end{itemize}
\end{definition}
If $\mathcal{M}$ is clear from the context, we will omit the subscript in $\Theta^k_{\mathcal{M}}$. 
The possibilistic encoding of default rules used in Transformation \ref{transformationDefaults} is similar in spirit to the method from \cite{benferhat1997nonmonotonic}, which is based on the Z-ranking from \cite{pearl1990system}. However, because $p_{\mathcal{M}}$ already provides us with a model of the default rules, we can directly encode default rules in possibilistic logic, without having to rely on the Z-ranking. Also note that although the method is described in terms of an MLN, it can be used for encoding any ranking on possible worlds (assuming a finite set of atoms).

As illustrated in the following example, \eqref{eqBlockingRuleDefaults} is needed to avoid deriving too much, serving a similar purpose to \eqref{eqFormulasDrowningB} in the approach from Section \ref{secSelectiveDrowning}. 
\begin{example}
Consider the following MLN $\mathcal{M}$:
\begin{align*}
2:&\,\, \neg a \vee b &
2:&\,\, a \vee b &
1:&\,\, a \vee \neg b
\end{align*}
Then $\Theta^1 = \Theta \cup \Psi$, where
\begin{align*}
\Theta &= \{(\top \rightarrow a\wedge b,\lambda_0), (\neg a \rightarrow b,\lambda_1), (\neg b \rightarrow \top,\lambda_2)\}\\
\Psi &= \{(b,\lambda_1),(a \vee \neg b,\lambda_0)\} 
\end{align*}
We find $(\Theta,\{\neg b\})\possent a$ while $(\mathcal{M},\{\neg b\})\not\mapent a$. Accordingly, we have $(\Theta\cup \Psi,\{\neg b\})\not\possent a$.
\end{example}
Transformations 2 and 3 have complementary strengths. For example, Transformation 2 may lead to more compact theories for relatively simple MLNs, e.g.\ if for most of the considered evidence sets, there is a unique set of formulas from the MLN that characterizes the most probable models of the evidence (cf.\ Lemma \ref{LemmaSelectiveMAPStates}). On the other hand, Transformation 3 may lead to substantially more compact theories in cases where the number of formulas is large relative to the number of atoms. 

We now show the correctness of Transformation \ref{transformationDefaults}.
\begin{proposition}\label{prop:defaultbased}
Let $\mathcal{M}$ be an MLN, $k$ a positive integer and $\Theta^k$ the proposed possibilistic logic encoding of $\mathcal{M}$. Furthermore, let  $E$ and $C$ be sets of literals such that $|E|+|C| \leq k+1$. It holds that $(\mathcal{M},E) \mapent \bigvee C$ if and only if $(\Theta^k,E) \possent \bigvee C$.
\end{proposition}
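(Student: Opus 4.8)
The plan is to reduce the statement to (i) a penalty characterisation of $\mapent$ and (ii) an exact computation of the consistency level of $\Theta^k$ under evidence. For (i), writing $\neg C=\{\neg c\mid c\in C\}$, one has $(\mathcal{M},E)\mapent\bigvee C$ iff $E$ is inconsistent with the hard rules of $\mathcal{M}$, or $\textit{pen}(\mathcal{M},E\cup\neg C)>\textit{pen}(\mathcal{M},E)$; this is immediate from the definition of $\max(\mathcal{M},\cdot)$ together with the fact that $\textit{pen}(\mathcal{M},\cdot)$ is nondecreasing under adding literals. I would also use throughout that, for a set of literals $E$, the ``reduct'' $\hat E$ obtained by repeatedly deleting a literal $y$ with $(\mathcal{M},E\setminus\{y\})\mapent y$ satisfies $\max(\mathcal{M},\hat E)=\max(\mathcal{M},E)$, hence $\textit{pen}(\mathcal{M},\hat E)=\textit{pen}(\mathcal{M},E)$, $\lambda_{\hat E}=\lambda_E$, and $X_{\hat E}=X_E$ (writing $X_E$ for the corresponding set $X$ in the transformation); moreover $\hat E$ is never redundant, so the default rule $(\bigwedge\hat E\rightarrow\bigwedge X_{\hat E},\lambda_E)$, and, when $\textit{pen}(\mathcal{M},E)>0$, the matching blocking rule \eqref{eqBlockingRuleDefaults}, genuinely occur in $\Theta^k$, even when $E$ itself is redundant.

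The crucial step is a lemma: if $|E|\le k$ and $E$ is consistent with the hard rules, then $\textit{con}(\Theta^k\cup\{(e,1)\mid e\in E\})=\lambda_E$, and moreover every $\omega\in\max(\mathcal{M},E)$ satisfies the $\lambda_E$-cut $A:=(\Theta^k\cup\{(e,1)\mid e\in E\})_{\lambda_E}$. For the second assertion, such $\omega$ satisfies $E$ and the hard rules; for a default rule $(\bigwedge F\rightarrow\bigwedge X_F,\lambda_F)$ with $\lambda_F\ge\lambda_E$, if $\omega\models\bigwedge F$ then $\omega$ is a hard-consistent model of $F$, so $\textit{pen}(\mathcal{M},F)\le\textit{pen}(\mathcal{M},\omega)=\textit{pen}(\mathcal{M},E)$, whence $\lambda_F=\lambda_E$, $\omega\in\max(\mathcal{M},F)$ and $\omega\models\bigwedge X_F$; a blocking rule with $\lambda_F'\ge\lambda_E$ forces $\lambda_F>\lambda_E$, so the same estimate excludes $\omega\models\bigwedge F$ and the rule is vacuously satisfied. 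This gives $\textit{con}\le\lambda_E$. For $\textit{con}\ge\lambda_E$, when $\textit{pen}(\mathcal{M},E)>0$ the cut at the largest certainty level strictly below $\lambda_E$ still contains the default rule for $\hat E$, its blocking rule, and the evidence $E\supseteq\hat E$, which are jointly contradictory; when $\textit{pen}(\mathcal{M},E)=0$ the bound is trivial.

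Granting the lemma, the direction $(\Theta^k,E)\possent\bigvee C\Rightarrow(\mathcal{M},E)\mapent\bigvee C$ is short: $(\Theta^k,E)\possent\bigvee C$ means $A\models\bigvee C$, and since $\max(\mathcal{M},E)\subseteq\sem{A}$ by the lemma, every MAP world of $E$ satisfies $\bigvee C$; the cases $|C|=0$ and $E$ hard-inconsistent are checked directly. For the converse I would induct on $|C|$, assuming after disposing of those degenerate cases that $E$ is hard-consistent. If $|C|\le1$ then $|E|\le k$, and $(\mathcal{M},E)\mapent\bigvee C$ says $C\subseteq X_E$; since the default rule for $\hat E$ lies in $A$ and $E\models\bigwedge\hat E$, we get $A\models\bigwedge X_{\hat E}=\bigwedge X_E$, hence $A\models\bigvee C$. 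If $|C|=m\ge2$, fix $c_m\in C$ and set $C'=C\setminus\{c_m\}$: if $(\mathcal{M},E)\mapent c_m$ we conclude as before, and otherwise $\textit{pen}(\mathcal{M},E\cup\{\neg c_m\})=\textit{pen}(\mathcal{M},E)$, so the MAP worlds of $E\cup\{\neg c_m\}$ are exactly those of $E$ satisfying $\neg c_m$, giving $(\mathcal{M},E\cup\{\neg c_m\})\mapent\bigvee C'$. Here $|E\cup\{\neg c_m\}|\le k$ and $|E\cup\{\neg c_m\}|+|C'|\le k+1$, so the induction hypothesis yields $(\Theta^k,E\cup\{\neg c_m\})\possent\bigvee C'$; by the lemma $\lambda_{E\cup\{\neg c_m\}}=\lambda_E$, so the relevant cut equals $A\cup\{\neg c_m\}$, whence $A\models\neg c_m\rightarrow\bigvee C'$, i.e.\ $A\models\bigvee C$, and the lemma gives $(\Theta^k,E)\possent\bigvee C$.

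The main obstacle is the consistency-level lemma, and in particular its lower bound together with the bookkeeping around redundant evidence and the blocking rules: one must ensure that a default rule of weight \emph{exactly} $\lambda_E$ (possibly only for the reduct $\hat E$) is present, that its blocking rule sits at precisely the next lower certainty level, and that no other formula of $\Theta^k$ drags the consistency level below $\lambda_E$. A secondary delicate point is the identity ``relevant cut $=A\cup\{\neg c_m\}$'' in the inductive step, which relies on $\lambda_E$ being the consistency level for both $E$ and $E\cup\{\neg c_m\}$ and on evidence literals carrying weight $1$.
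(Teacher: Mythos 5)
Your proposal is correct and follows essentially the same route as the paper: your ``MAP worlds satisfy the $\lambda_E$-cut'' claim is the paper's Lemma \ref{lemma:lemma2}, your reduct $\hat E$ packages the induction in the paper's Lemma \ref{lemma:firstlemma}, your induction on $|C|$ reproduces Lemmas \ref{lemma:moving} and \ref{lemma:clausalconsequence}, and the consistency-level computation via the blocking rule \eqref{eqBlockingRuleDefaults} is exactly the paper's proof of the proposition. Your explicit treatment of redundant evidence sets through $\hat E$ when establishing $\textit{con}(\Theta^k,E)=\lambda_E$ is, if anything, slightly more careful than the paper's terse version of that step.
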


Before we prove Proposition \ref{prop:defaultbased}, we present a number of lemmas.
In the lemmas and proofs below, $\mathcal{M}$ will always be an MLN, $\Theta^k$ will be the corresponding possibilistic logic theory and $k$ will be the maximum size of the evidence sets considered in the translation.

\begin{lemma}\label{lemma:firstlemma}
If $E$ is a set of literals, $|E| \leq k$, $\lambda = \phi(E)$ and $(\mathcal{M},E) \mapent x$ then 
$$\left\{ \left( \bigwedge E' \rightarrow \bigwedge X \right) \in \Theta^k_{\lambda} \mbox{ s.t. } |E'| \leq |E| \right\} \vdash  \bigwedge E \rightarrow x$$
\end{lemma}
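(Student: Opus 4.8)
The plan is to unwind the definitions of $\Theta^k$ and of $\phi$, and then exhibit a short classical proof of $\bigwedge E \rightarrow x$ from formulas in $\Theta^k_\lambda$ of the required form. The key observation is that $\lambda = \phi(E)$ is precisely the certainty level $\lambda_E$ attached (in Transformation \ref{transformationDefaults}) to the default rule generated by the evidence set $E$ itself, and that the $\lambda$-cut $\Theta^k_\lambda$ collects all default rules $\bigwedge E' \rightarrow \bigwedge X_{E'}$ with $\phi(E') \geq \phi(E)$, together with the hard rules.

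First I would argue that, up to the side condition about literals $y \in E$ with $\bigwedge(E \setminus \{y\}) \mapent y$, the rule $\bigl(\bigwedge E \rightarrow \bigwedge X, \lambda_E\bigr)$ itself lies in $\Theta^k_\lambda$, where $X = \{x' \,|\, (\mathcal{M},E) \mapent x'\}$; since $(\mathcal{M},E) \mapent x$ by hypothesis, $x$ is a conjunct of $\bigwedge X$, so this single rule already yields $\bigwedge E \rightarrow x$. The real work is therefore to handle the case where the side condition fails, i.e.\ where $E$ contains a literal $y$ that is itself a MAP-consequence of $E \setminus \{y\}$, so that no rule is generated for $E$ directly. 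In that case I would set $E_1 = E \setminus \{y\}$, note $|E_1| < |E| \leq k$, and use the fact that $\textit{pen}(\mathcal{M}, E_1 \wedge y) = \textit{pen}(\mathcal{M}, E_1)$ (because $y$ holds in all most plausible models of $E_1$), hence $\phi(E_1) = \phi(E) = \lambda$ and $|E_1| \leq |E|$, so a rule for $E_1$ would still sit at level $\lambda$ in $\Theta^k_\lambda$. Moreover $(\mathcal{M},E_1) \mapent x$ as well (the most plausible models of $E_1$ and of $E$ coincide, since adding the already-entailed literal $y$ changes nothing), so by applying the argument recursively to $E_1$ — which is strictly smaller — I obtain from rules in $\Theta^k_\lambda$ with antecedent size $\le |E_1| \le |E|$ a proof of $\bigwedge E_1 \rightarrow x$, and a fortiori $\bigwedge E \rightarrow x$. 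This is really an induction on $|E|$, with the base case being the side-condition-satisfied case above.

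I also need to take care of the hard rules: when $\phi(E) = 1$ (i.e.\ $E$ itself violates a hard constraint, $\textit{pen}(\mathcal{M},E) = +\infty$) the argument still goes through since then $\bigwedge E$ is classically inconsistent with the hard rules $\{F : (F,1) \in \Theta^k\} \subseteq \Theta^k_\lambda$, so $\bigwedge E \rightarrow x$ follows trivially. Conversely, if $E$ is consistent with the hard rules then so is $E_1$, keeping the recursion well-defined.

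The main obstacle I anticipate is the bookkeeping around the side condition and the antecedent-size bound: one must verify carefully that, when passing from $E$ to $E_1 = E\setminus\{y\}$, (i) $\phi$ is genuinely unchanged so that the rule for $E_1$ survives in the same cut $\Theta^k_\lambda$, (ii) $(\mathcal{M},E_1)\mapent x$ still holds, and (iii) the induction terminates — which it does because $|E_1| < |E|$ — and that throughout, the antecedent sizes of the rules invoked never exceed $|E|$. Everything else (the classical derivation, the monotonicity of $\phi$ under conjunction giving $\phi(E') \geq \phi(E)$ for the relevant $E'$) is routine, relying on Corollary \ref{propCorrespondencePenalties} and the definition of $\textit{pen}$.
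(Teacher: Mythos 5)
Your proposal is correct and follows essentially the same route as the paper's proof: induction on $|E|$, with the case where the side condition holds handled directly by the rule $(\bigwedge E \rightarrow \bigwedge X, \lambda_E)$ present in $\Theta^k_\lambda$, and the other case handled by removing an entailed literal $y$, observing $\phi(E\setminus\{y\})=\phi(E)$ and $(\mathcal{M},E\setminus\{y\})\mapent x$, and recursing. Your extra remarks about the hard-constraint case and the antecedent-size bookkeeping are consistent with (and slightly more explicit than) the paper's argument.
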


\begin{lemma}\label{lemma:lemma2}
If $\phi( \omega) \leq \lambda$ then $\omega$ is a model of $\cP_\lambda$.
\end{lemma}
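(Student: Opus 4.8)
The plan is to unwind the definition of the $\lambda$-cut $\Theta^k_\lambda$ and to verify, by a case distinction over the three kinds of formulas that Transformation \ref{transformationDefaults} places into $\Theta^k$, that $\omega$ satisfies every classical formula occurring in $\Theta^k$ with certainty weight at least $\lambda$; this is exactly what it means for $\omega$ to be a model of $\Theta^k_\lambda$.

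Before the case analysis I would record two preliminary facts. First, since $\phi(\omega)\leq\lambda<1$, $\omega$ must satisfy all the hard constraints of $\mathcal{M}$ (otherwise $\phi(\omega)=1$), so the arithmetic form of $\phi$ applies to $\omega$. (The boundary value $\lambda=1$ is not needed: in every application of this lemma, $\lambda$ has the form $\phi(E)$ for an evidence set $E$ consistent with the hard constraints, so $\lambda<1$; alternatively one simply restricts attention to worlds consistent with the hard constraints.) Second --- the key monotonicity observation --- if $\omega$ satisfies every literal of a set $E$, then $\phi(\omega)\geq\phi(\bigwedge E)$, with equality iff $\omega\in\max(\mathcal{M},E)$. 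Indeed, $\omega$ is among the worlds over which the maximum defining $\textit{sat}(\mathcal{M},E)$ is taken, so $\textit{sat}(\mathcal{M},\omega)\leq\textit{sat}(\mathcal{M},E)$ and hence $\textit{pen}(\mathcal{M},\omega)\geq\textit{pen}(\mathcal{M},E)$; since $\phi$ is non-decreasing in $\textit{pen}(\mathcal{M},\cdot)$ (as $L>0$), this gives $\phi(\omega)\geq\phi(\bigwedge E)$, and equality of the $\phi$-values forces equality of the penalties, i.e.\ $\omega\in\max(\mathcal{M},E)$.

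Then I would run the three cases. \textbf{(i)} If $(F,1)\in\Theta^k$ with $F$ a hard rule, then $\omega\models F$ because $\omega$ satisfies all hard constraints. \textbf{(ii)} If $(\bigwedge E\rightarrow\bigwedge X,\lambda_E)\in\Theta^k$ with $\lambda_E\geq\lambda$: if $\omega\not\models\bigwedge E$ the implication holds vacuously; if $\omega\models\bigwedge E$, the monotonicity observation gives $\phi(\omega)\geq\phi(\bigwedge E)=\lambda_E\geq\lambda\geq\phi(\omega)$, so these are all equal and therefore $\omega\in\max(\mathcal{M},E)$; since every $x\in X$ satisfies $(\mathcal{M},E)\mapent x$, i.e.\ is true in every world of $\max(\mathcal{M},E)$, we get $\omega\models\bigwedge X$, hence $\omega\models\bigwedge E\rightarrow\bigwedge X$. \textbf{(iii)} If $(\neg(\bigwedge E\wedge\bigwedge X),\lambda_E')\in\Theta^k$ with $\lambda_E'\geq\lambda$: by construction $\lambda_E'<\lambda_E=\phi(\bigwedge E)$, so if $\omega\models\bigwedge E$ we would get $\phi(\omega)\geq\lambda_E>\lambda_E'\geq\lambda\geq\phi(\omega)$, a contradiction; hence $\omega\not\models\bigwedge E$, and a fortiori $\omega\models\neg(\bigwedge E\wedge\bigwedge X)$. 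Since these exhaust the formulas of $\Theta^k$, $\omega$ is a model of $\Theta^k_\lambda$.

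I expect the only delicate point to be the bookkeeping around worlds that violate the hard constraints (together with the associated $\lambda=1$ corner case); once that is dispatched, the lemma reduces entirely to the penalty-monotonicity of $\phi$ and the definition of the set $X$ in Transformation \ref{transformationDefaults}.
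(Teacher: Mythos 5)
Your proof is correct and follows essentially the same route as the paper's: a case analysis over the formula types in $\Theta^k$, with the key step in each case being the inequality chain $\lambda \leq \phi(E) \leq \phi(\omega) \leq \lambda$ (forcing $\omega \in \max(\mathcal{M},E)$ for the default rules) and the strict inequality $\lambda_E' < \lambda_E = \phi(E)$ for the blocking formulas \eqref{eqBlockingRuleDefaults}. Your explicit treatment of the hard constraints and of the $\lambda=1$ corner case is a minor point the paper leaves implicit, but it does not change the argument.
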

\begin{proof}
If there were a formula $F = \left(\bigwedge E \right) \rightarrow \left(\bigwedge X \right)$ in $\cP_\lambda$ that was not satisfied by $\omega$, then its body would have to be true in $\omega$ but then necessarily 
$$\lambda \leq \phi( E)  \leq  \phi( \omega) \leq \lambda.$$
The first inequality follows from the fact that, by the construction of $\cP$, if the certainty weight of $F$ is at least $\lambda$ then it must be the case that $\phi( E) \geq \lambda$. The second inequality follows from the fact that $\omega$ was assumed to be a model of $\bigwedge E$. It follows that:
$$\textit{pen}(\mathcal{M}, \omega) = \textit{pen}(\mathcal{M},E).$$
\noindent However, this would mean that $\omega$ is also a most probable world of $(\mathcal{M},E)$, but then $\omega\models F$ by construction of $\cP$.

If there were an unsatisfied formula $F = \neg \left(\bigwedge E \wedge \bigwedge X\right)$ in $\cP_\lambda$ then by construction we would have $\phi(E \cup X) > \lambda$. However, from $\omega\models \bigwedge E \wedge \bigwedge X$ we find $\phi(E \cup X)\leq \phi(\omega)\leq \lambda$, a contradiction.

Since all formulas in $\Theta^k$ are of the two considered types, it follows that all formulas from $\cP$ whose certainty weight is at least $\lambda$ must be satisfied in $\omega$.
\end{proof}

\begin{lemma}\label{lemma:moving}
If $(\mathcal{M}, E) \mapent (y_1 \vee \dots \vee y_m)$ then 
\begin{enumerate}
\item[(i)] for any $i$, either $(\mathcal{M}, E \cup \{ \neg y_i \}) \mapent (y_1 \vee \dots \vee y_{i-1} \vee y_{i+1} \vee \dots \vee y_m)$ or $(\mathcal{M}, E) \mapent y_i$,
\item[(ii)] there exist a $j$ and a set $\{ y_1', \dots, y_{m'}' \}  \subseteq \{ y_1, \dots, y_m \} \setminus \{ y_j \}$ such that $(\mathcal{M}, E \cup \{ \neg y_1', \dots, \neg y_{m'}' \}) \mapent y_j$.
\end{enumerate}
\end{lemma}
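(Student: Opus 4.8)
The plan is to argue entirely in terms of the penalty function $\textit{pen}(\mathcal{M},\cdot)$ and the set $\max(\mathcal{M},E)$ of most probable worlds, using the fact that $(\mathcal{M},E)\mapent\alpha$ holds exactly when every $\omega\in\max(\mathcal{M},E)$ satisfies $\alpha$. We may assume $\max(\mathcal{M},E)\neq\emptyset$: if $E$ is incompatible with the hard constraints of $\mathcal{M}$, then $\max(\mathcal{M},E')=\emptyset$ for every $E'\supseteq E$ and both (i) and (ii) hold vacuously. We may also assume $y_1,\dots,y_m$ are pairwise distinct, since duplicates can be removed from the disjunction without affecting any of the statements.

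For (i), fix $i$ and suppose $(\mathcal{M},E)\not\mapent y_i$, so there is $\omega^\star\in\max(\mathcal{M},E)$ with $\omega^\star\models\neg y_i$. First I would establish $\textit{pen}(\mathcal{M},E\cup\{\neg y_i\})=\textit{pen}(\mathcal{M},E)$: the inequality ``$\geq$'' holds because adding a formula to the evidence can only shrink the set of satisfying worlds and hence can only raise the penalty, while ``$\leq$'' holds because $\omega^\star$ satisfies $E\cup\{\neg y_i\}$ and already attains the value $\textit{pen}(\mathcal{M},E)$. From this equality it follows that $\max(\mathcal{M},E\cup\{\neg y_i\})=\{\omega\in\max(\mathcal{M},E)\mid\omega\models\neg y_i\}$. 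Each such $\omega$ satisfies $E$, hence $y_1\vee\dots\vee y_m$ by hypothesis, and also $\neg y_i$, so it satisfies $\bigvee_{k\neq i}y_k$; this is precisely $(\mathcal{M},E\cup\{\neg y_i\})\mapent(y_1\vee\dots\vee y_{i-1}\vee y_{i+1}\vee\dots\vee y_m)$.

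For (ii), I would apply (i) repeatedly, peeling off one disjunct at a time. Put $E_0=E$; while the current disjunction is not a singleton, apply (i) to $(\mathcal{M},E_{t-1})\mapent(y_t\vee\dots\vee y_m)$ with index $t$. Either $(\mathcal{M},E_{t-1})\mapent y_t$, in which case we stop with $j=t$ and $\{y_1',\dots,y_{m'}'\}=\{y_1,\dots,y_{t-1}\}$; or else $(\mathcal{M},E_t)\mapent(y_{t+1}\vee\dots\vee y_m)$ with $E_t=E_{t-1}\cup\{\neg y_t\}$, and we continue. The disjunction strictly shrinks, so after at most $m-1$ steps we have either stopped or reached $(\mathcal{M},E_{m-1})\mapent y_m$ with $E_{m-1}=E\cup\{\neg y_1,\dots,\neg y_{m-1}\}$, and we stop with $j=m$ and $\{y_1',\dots,y_{m'}'\}=\{y_1,\dots,y_{m-1}\}$. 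Since $y_t$ is never among $y_1,\dots,y_{t-1}$, the selected set is always contained in $\{y_1,\dots,y_m\}\setminus\{y_j\}$, as required. (A more direct alternative: take a subset-maximal $S\subseteq\{y_1,\dots,y_m\}$ with $\textit{pen}(\mathcal{M},E\cup\{\neg y\mid y\in S\})=\textit{pen}(\mathcal{M},E)$; the hypothesis forces $S\neq\{y_1,\dots,y_m\}$, and for any $y_j\notin S$ maximality gives $\textit{pen}(\mathcal{M},E\cup\{\neg y\mid y\in S\}\cup\{\neg y_j\})>\textit{pen}(\mathcal{M},E\cup\{\neg y\mid y\in S\})$, which by the argument used for (i) yields $(\mathcal{M},E\cup\{\neg y\mid y\in S\})\mapent y_j$.)

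The only delicate points are the bookkeeping around $\max$ and $\textit{pen}$ in (i) — namely verifying $\textit{pen}(\mathcal{M},E\cup\{\neg y_i\})=\textit{pen}(\mathcal{M},E)$ and the ensuing identification of $\max(\mathcal{M},E\cup\{\neg y_i\})$ with the subset of $\max(\mathcal{M},E)$ falsifying $y_i$ — together with the edge cases where evidence is incompatible with the hard constraints, which are dispatched by the vacuity reduction at the outset. Once (i) is in place, the iteration in (ii) is routine, its only subtlety being that the accumulated set of negated disjuncts stays a proper subset disjoint from the final choice $y_j$.
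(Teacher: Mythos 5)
Your proof is correct and follows essentially the same route as the paper's: part (i) rests on the same key observation that when some most probable world of $E$ falsifies $y_i$, the most probable worlds of $E\cup\{\neg y_i\}$ are exactly those most probable worlds of $E$ falsifying $y_i$ (which you justify carefully via $\textit{pen}$, where the paper states it as $\Omega'_{i^*}=\Omega''_{i^*}\subseteq\Omega$), and part (ii) is the same repeated application of (i) that the paper invokes, which you merely spell out in more detail together with the vacuity edge cases.
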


\begin{lemma}\label{lemma:clausalconsequence}
If $|C|+|E| \leq k+1$, and $\lambda = \phi(E)$ then $\cP_\lambda \cup E \vdash \bigvee C$ if and only if $(\mathcal{M},E) \mapent \bigvee C$.
\end{lemma}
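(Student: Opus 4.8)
The plan is to prove the two implications separately, reusing the earlier lemmas as black boxes. Write $\lambda=\phi(E)$ throughout. First I would dispose of two degenerate cases. If $E$ is inconsistent with the hard constraints of $\cM$, then $\max(\cM,E)=\emptyset$, so $(\cM,E)\mapent\beta$ for every $\beta$, while $\cP$ contains $(F,1)$ for each hard rule $F$, so $\cP_\lambda\cup E$ is already inconsistent and proves everything too; the equivalence is then trivial, and from now on $\max(\cM,E)\neq\emptyset$ and $\lambda<1$. Similarly, if $C=\emptyset$ then $\bigvee C=\bot$, which (under a consistent $E$) is neither MAP-entailed by $(\cM,E)$ nor provable from $\cP_\lambda\cup E$, the latter because the countermodel constructed below witnesses consistency of $\cP_\lambda\cup E$; so we may also assume $C\neq\emptyset$.

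For the ``only if'' direction ($\cP_\lambda\cup E\vdash\bigvee C$ implies $(\cM,E)\mapent\bigvee C$) I would argue by contraposition, exhibiting a model of $\cP_\lambda\cup E$ that falsifies $\bigvee C$. Suppose $(\cM,E)\not\mapent\bigvee C$; since $\max(\cM,E)\neq\emptyset$ there is $\omega^{\ast}\in\max(\cM,E)$ with $\omega^{\ast}\models E$ and $\omega^{\ast}\not\models\bigvee C$. As $\omega^{\ast}$ is a most probable world satisfying $E$, we have $\textit{pen}(\cM,\omega^{\ast})=\textit{pen}(\cM,E)$ and hence $\phi(\omega^{\ast})=\phi(E)=\lambda$. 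By Lemma~\ref{lemma:lemma2}, $\omega^{\ast}\models\cP_\lambda$, so $\omega^{\ast}$ is a model of $\cP_\lambda\cup E$ that does not satisfy $\bigvee C$; therefore $\cP_\lambda\cup E\not\vdash\bigvee C$.

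For the ``if'' direction I would reduce the clausal conclusion to a single literal and then pull in the default rules of $\cP$. Assume $(\cM,E)\mapent\bigvee C$ with $C=\{y_1,\dots,y_m\}$. By Lemma~\ref{lemma:moving}(ii) there are an index $j$ and a set $D\subseteq C\setminus\{y_j\}$ such that $(\cM,E')\mapent y_j$, where $E'=E\cup\{\neg y\mid y\in D\}$. Since $|D|\le m-1$ and $|E|+|C|\le k+1$, we get $|E'|\le k$, so $E'$ is one of the evidence sets considered in Transformation~\ref{transformationDefaults}. Because $E\subseteq E'$, every model of $\bigwedge E'$ is a model of $\bigwedge E$, so $\textit{sat}(\cM,E')\le\textit{sat}(\cM,E)$ and hence $\phi(E')\ge\phi(E)=\lambda$, giving $\cP_{\phi(E')}\subseteq\cP_\lambda$. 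Applying Lemma~\ref{lemma:firstlemma} to $E'$ and $y_j$ yields that a subset of $\cP_{\phi(E')}$, and therefore of $\cP_\lambda$, classically entails $\bigwedge E'\rightarrow y_j$. Adding the facts in $E$ and discharging the $\bigwedge E$ part of the antecedent gives $\cP_\lambda\cup E\vdash y_j\vee\bigvee_{y\in D}y$, and since $\{y_j\}\cup D\subseteq C$ this clause classically implies $\bigvee C$; hence $\cP_\lambda\cup E\vdash\bigvee C$.

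The genuinely hard work is all in the preceding lemmas, so the main obstacle here is the bookkeeping of certainty levels: one must check that the rules produced by Lemma~\ref{lemma:firstlemma} for the enlarged evidence set $E'$ still live at level $\lambda=\phi(E)$ or above, which is exactly where monotonicity of $\textit{pen}$ under adding evidence is invoked. A secondary delicate point is the use of Lemma~\ref{lemma:moving}(ii): it is what permits trading the whole clause $\bigvee C$ for a single literal $y_j$ under the enlarged evidence, and one must verify both that the size budget survives this trade (which is where the hypothesis $|E|+|C|\le k+1$ is consumed, leaving $|E'|\le k$) and that the clause actually derived, $y_j\vee\bigvee_{y\in D}y$, is a sub-clause of $\bigvee C$ and so implies it.
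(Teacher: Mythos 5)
Your proof is correct and follows essentially the same route as the paper's: the first direction is the paper's semantic argument via Lemma~\ref{lemma:lemma2} (merely phrased contrapositively), and the second direction uses Lemma~\ref{lemma:moving}(ii) to reduce to a single literal under enlarged evidence $E'$ with $|E'|\le k$ and then Lemma~\ref{lemma:firstlemma}, exactly as in the paper. Your explicit handling of the degenerate cases and of the level bookkeeping $\phi(E')\ge\phi(E)$ (which the paper leaves implicit) is a welcome but inessential addition.
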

We now turn to the proof of Proposition \ref{prop:defaultbased}.
\begin{proof}[Proof of Proposition \ref{prop:defaultbased}]
Let $E$ be an evidence set such that $|E| \leq k$ and let $\lambda = \phi(E)$. Given Lemma \ref{lemma:clausalconsequence}, it is sufficient to show that $\textit{con}(\Theta^k,E)=\lambda$. It follows from Lemma \ref{lemma:lemma2} that $\textit{con}(\Theta^k,E) \leq \lambda$. Let $X = \{x \,|\, (\mathcal{M},E) \mapent x \}$ be the set of literals which can be derived from $(\mathcal{M},E)$ using MAP inference. 
By construction, $\Theta^k$ contains a formula $\neg (\bigwedge E \wedge \bigwedge X)$ with a certainty weight which is just below $\lambda$. Specifically, for $\lambda'<\lambda$ we either have $\Theta^k_\lambda = \Theta^k_{\lambda'}$ or $\Theta^k_{\lambda'} \models \neg \bigwedge E$, from which we find $\textit{con}(\Theta^k,E) = \lambda$.
\end{proof}

It is of interest to remove any formulas in $\Theta^k$ that are redundant, among others because this is likely to make the theory easier to interpret. Although we can use possibilistic logic inference to identify redundant formulas, in some cases we can avoid adding the redundant formulas altogether. For example, in the transformation procedure, we do not add any rules for $E$ if it holds that $E\setminus \{y\} \mapent y$ for some $y\in E$. This pruning rule is the counterpart of the cautious monotonicity property, which is well-known in the context of default reasoning \cite{KLM}. Any ranking on possible worlds also satisfies the stronger rational monotonicity property, which translated to our setting states that when $(\mathcal{M},E\setminus \{y\}) \mapent x$ and $(\mathcal{M},E\setminus \{y\}) \not\mapent \neg y$ it holds that $(\mathcal{M},E) \mapent x$. Accordingly, when processing the evidence set $E$ in the transformation procedure, instead of $(\bigwedge E \rightarrow \bigwedge X,\lambda_E)$ it is sufficient to add the following rule:
$$
(\bigwedge E \rightarrow \bigwedge (X \setminus X_0) ,\lambda_E)
$$
where
$$
X_0 = \{x \,|\, E\setminus \{y\} \mapent x \textit{ and } E\setminus\{y\} \not\mapent \neg y \}
$$
The correctness of this pruning step follows from the following proposition.


\begin{proposition}\label{prop:rationalmapmonotonicity}
Let $x$ and $y$ be literals. If $|E| < k$, $(\mathcal{M},E) \mapent x$ and $(\mathcal{M},E) \not\mapent \neg y$ then:
\begin{align*}
\Theta^k \setminus \{F\} \models  \big( \bigwedge E \wedge y \rightarrow x, \lambda_{E\cup \{ y \}} \big)
\end{align*}
where $F$ is the formula in $\Theta^k$ corresponding to the evidence set $E\cup \{y\}$, i.e.:
$$F = \bigwedge (E\cup \{y\}) \rightarrow \bigwedge \{x \,|\, (\mathcal{M},E \cup \{y\}) \mapent x\}$$
\end{proposition}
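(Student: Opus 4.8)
The plan is to bypass the rule $F$ altogether: I will derive the (weaker) implication $\bigwedge E\wedge y\rightarrow x$ from the lower-arity rules that Lemma~\ref{lemma:firstlemma} already guarantees for the evidence set $E$ itself, after first checking that the certainty level $\lambda_{E\cup\{y\}}$ at which the statement asks us to entail this implication is in fact equal to $\lambda_E$.

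First I would show $\lambda_{E\cup\{y\}}=\lambda_E$. Since $(\mathcal{M},E)\not\mapent\neg y$, there is a world $\omega^\star\in\max(\mathcal{M},E)$ with $\omega^\star\models y$. This $\omega^\star$ is a model of $E\cup\{y\}$ realizing the value $\textit{sat}(\mathcal{M},E)$, so $\textit{sat}(\mathcal{M},E\cup\{y\})\geq\textit{sat}(\mathcal{M},E)$; conversely, adding a literal to the evidence only shrinks the set of admissible worlds, so $\textit{sat}(\mathcal{M},E\cup\{y\})\leq\textit{sat}(\mathcal{M},E)$. Hence the two are equal, $\textit{pen}(\mathcal{M},E\cup\{y\})=\textit{pen}(\mathcal{M},E)$, and therefore $\lambda_{E\cup\{y\}}=\phi(E\cup\{y\})=\phi(E)=\lambda_E$ (note that $E$, and hence $E\cup\{y\}$, is consistent with the hard constraints, since otherwise $\max(\mathcal{M},E)=\emptyset$ and $(\mathcal{M},E)\mapent\neg y$ would hold vacuously). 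This is exactly where the hypothesis $(\mathcal{M},E)\not\mapent\neg y$ is used: without it $\lambda_{E\cup\{y\}}$ could be strictly larger than $\lambda_E$.

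Next, since $|E|<k$ (so in particular $|E|\leq k$) and $(\mathcal{M},E)\mapent x$ by hypothesis, Lemma~\ref{lemma:firstlemma} applied to $E$ with $\lambda=\phi(E)=\lambda_E$ yields
$$\big\{\,(\textstyle\bigwedge E'\rightarrow\bigwedge X_{E'})\in\Theta^k_{\lambda_E}\ :\ |E'|\leq|E|\,\big\}\vdash\textstyle\bigwedge E\rightarrow x ,$$
where $X_{E'}=\{z\mid(\mathcal{M},E')\mapent z\}$. Every rule in this set has a body of size at most $|E|$, whereas $F$ has body $E\cup\{y\}$ of size $|E|+1$; hence none of these rules is $F$, so the whole set lies in $(\Theta^k\setminus\{F\})_{\lambda_E}$, which by the previous step equals $(\Theta^k\setminus\{F\})_{\lambda_{E\cup\{y\}}}$. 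Consequently $(\Theta^k\setminus\{F\})_{\lambda_{E\cup\{y\}}}\vdash\bigwedge E\rightarrow x$, and since $\bigwedge E\rightarrow x$ classically entails $\bigwedge E\wedge y\rightarrow x$, also $(\Theta^k\setminus\{F\})_{\lambda_{E\cup\{y\}}}\vdash\bigwedge E\wedge y\rightarrow x$. By the cut characterisation of possibilistic entailment ($\Theta\models(\alpha,\lambda)$ iff $\Theta_\lambda\models\alpha$), this is precisely $\Theta^k\setminus\{F\}\models(\bigwedge E\wedge y\rightarrow x,\lambda_{E\cup\{y\}})$, as required.

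The only genuinely delicate step is the weight collapse $\lambda_{E\cup\{y\}}=\lambda_E$: it is what keeps the rules supplied by Lemma~\ref{lemma:firstlemma} — all of which have strictly smaller bodies than $F$, hence survive its removal — inside the cut $\Theta^k_{\lambda_{E\cup\{y\}}}$; once that is in place, the rest is routine classical weakening. It is worth noting that the argument never touches the head $\bigwedge\{z\mid(\mathcal{M},E\cup\{y\})\mapent z\}$ of $F$; combining the conclusion with Proposition~\ref{prop:defaultbased} then recovers, as a by-product, the rational-monotonicity fact that $(\mathcal{M},E)\mapent x$ and $(\mathcal{M},E)\not\mapent\neg y$ imply $(\mathcal{M},E\cup\{y\})\mapent x$.
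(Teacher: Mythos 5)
Your proposal is correct and follows essentially the same route as the paper's proof: establish $\textit{pen}(\mathcal{M},E)=\textit{pen}(\mathcal{M},E\cup\{y\})$ (hence $\lambda_{E\cup\{y\}}=\lambda_E$), then invoke Lemma~\ref{lemma:firstlemma} to derive $\bigwedge E\rightarrow x$ from rules whose antecedents have size at most $|E|$, none of which can be $F$. Your write-up is somewhat more explicit than the paper's (spelling out the $\textit{sat}$ argument for the weight collapse and the final classical weakening step), but the underlying argument is the same.
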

\begin{proof}
If $(\mathcal{M},E) \mapent x$ and $(\mathcal{M},E) \not\mapent \neg y$ then $(\mathcal{M},E \cup \{ y \}) \mapent x$ and 
$\textit{pen}(\mathcal{M},E) = \textit{pen}(\mathcal{M},E \cup \{ y \}) = \textit{pen}(\mathcal{M},E \cup \{ x,y \}).$ Therefore using Lemma \ref{lemma:firstlemma}, we find that $\Theta^k_{\lambda_{E\cup \{ y \}}} \vdash \bigwedge E \rightarrow x$. From Lemma \ref{lemma:firstlemma}, it furthermore follows that $\bigwedge E \rightarrow x$ can be derived from rules with antecedents of length at most $|E|$. In particular, we find that $\bigwedge E \rightarrow x$ can be derived without using the formula $\bigwedge E \wedge y \rightarrow \bigwedge X$.
\end{proof}
Finally, note that formulas of the form \eqref{eqBlockingRuleDefaults} can be omitted when $\lambda'_E = \lambda_{(E\setminus \{y\})}$ for some $y\in E$. Indeed, in such a case we find from $\textit{pen}(\mathcal{M},E\setminus \{y\})< \textit{pen}(\mathcal{M},E)$ that $(\mathcal{M},E\setminus \{y\})\mapent \neg y$, hence $\eqref{eqBlockingRuleDefaults}$ will be entailed by a formula of the form $\big(\bigwedge (E\setminus \{y\})\rightarrow \bigwedge X,\lambda_{E\setminus \{y\}}\big)$ in $\Theta^k$.


\section{ENCODING NON-GROUND NETWORKS}\label{secEncodingFirstOrder}

We now provide the counterpart to the construction from Section \ref{section:defaultEncoding} for non-ground MLNs.
The first-order nature of MLNs often leads to distributions with many symmetries which can be exploited by lifted inference methods \cite{Poole2003}. We can similarly exploit these symmetries for constructing more compact possibilistic logic theories from MLNs.

For convenience,  in the possibilistic logic theories, we will use {\em typed} formulas. For instance, when we have the formula
$\alpha = \textit{owns}(\textit{person}:X,\textit{thing}:Y)$
and the set of constants of the type {\it person} is $\{\textit{alice}, \textit{bob} \}$ and the set of constants of the type {\it thing} is $\{ \textit{car} \}$ then $\alpha$ corresponds to the ground formulas $\textit{owns}(alice,car)$ and $\textit{owns}(bob,car)$. In cases where there is only one type, we will not write it explicitly.

Two typed formulas $F_1$ and $F_2$ are said to be isomorphic when there is a type-respecting substitution $\theta$ of the variables of $F_1$ such that $F_1\theta \equiv F_2$ (where $\equiv$ denotes equivalence of logical formulas).  Two MLNs $\mathcal{M}_1$ and $\mathcal{M}_2$ are said to be isomorphic, denoted by $\mathcal{M}_1 \approx \mathcal{M}_2$, if there is a bijection $i$ from formulas of $\mathcal{M}_1$ to formulas of $\mathcal{M}_2$ such that for $i(F,w) = (F',w')$ it holds that $w = w'$ and the formulas $F$ and $F'$ are isomorphic. When $j$ is a permutation of a subset of constants from $\mathcal{M}$ then $j(\mathcal{M})$ denotes the MLN obtained by replacing any constant $c$ from the subset by its image $j(c)$. 
 
Given a non-ground MLN $\mathcal{M}$, we can first identify sets of constants which are {\em interchangeable}, where a set of constants $\mathcal{C}_{t}$ is said to be interchangeable if $j(\mathcal{M}) \approx \mathcal{M}$ for any permutation $j$ of the constants in $\mathcal{C}_t$. Note that to check whether a set of constants $\mathcal{C}_t$ is interchangeable, it is sufficient to check that $j(\mathcal{M}) \approx \mathcal{M}$ for those permutations which swap just two constants from $\mathcal{C}_t$. For every maximal set $\mathcal{C}_t$ of interchangeable constants, we introduce a new type $t$. For a constant $c$, we write $\tau(c)$ to denote its type. When $F$ is a ground formula, $\textit{variabilize}(F)$ denotes the following formula:
$$
\bigwedge \{V_c \neq V_d \,|\, c,d\in \textit{const}(F), \tau(c)=\tau(d)\} \rightarrow F'
$$
where $\textit{const}(F)$ is the set of constants appearing in $F$ and $F'$ is obtained from $F$ by replacing all constants $c$ by a new variable $V_c$ of type $\tau(c)$. 
\begin{definition}\label{defTransformationFirstOrder}
Given an MLN $\mathcal{M}$ and a positive integer $k$, we construct a possibilistic logic theory $\Theta^k_{\mathcal{M}}$ as follows:
\begin{itemize}
\item For each hard rule $F$ from $\mathcal{M}$, add $(F,1)$ to $\Theta^k_{\mathcal{M}}$.
\item For each set of literals $E$ such that $0 \leq |E| \leq k$, let $X = \{x \,|\, (\mathcal{M},E) \mapent x\}$. 
{For all $x \in X$}, unless there is a literal $y \in E$ such that $(\mathcal{M}, E \setminus \{y\}) \mapent y$ and unless $\Theta^k_{\mathcal{M}}$ already contains a formula isomorphic to $\textit{variabilize}\left(\bigwedge E \rightarrow \mathbf{x} \right)$, add 
$$\left(\textit{variabilize}\left(  \bigwedge E \rightarrow \mathbf{x} \right), \lambda_E \right) $$
 to $\Theta^k_{\mathcal{M}}$. If $\textit{pen}(\mathcal{M},E) >\textit{pen}(\mathcal{M},\emptyset)$ and  $\Theta^k_{\mathcal{M}}$ does not already contain a formula isomorphic to $\textit{variabilize} \left( \neg \left(\bigwedge E \wedge \bigwedge X \right) \right)$, add also
 \begin{align}\label{eqBlockingRuleDefaultsFirstOrder}
\left(\textit{variabilize} \left( \neg \left(\bigwedge E \wedge \bigwedge X \right) \right), \lambda_E' \right)
\end{align}
where $\lambda_E'$ is the certainty level just below $\lambda_E$ in $\Theta^k_{\mathcal{M}}$.
\end{itemize}
\end{definition}
As before, we will usually omit the subscript in $\Theta^k_{\mathcal{M}}$.
We can show that after grounding, $\Theta^k$ is equivalent to the theory that would be obtained by first grounding the MLN and then applying the method from Section \ref{section:defaultEncoding}. The correctness proof is provided in the online appendix. 


Our implementation\footnote{The implementation can be downloaded from: https://github.com/supertweety/mln2poss.}
 of Transformation \ref{defTransformationFirstOrder} relies on an efficient implementation of inference in possibilistic logic and Markov logic, efficient generation of non-redundant candidate evidence sets and efficient filtering of isomorphic formulas. For MAP inference in MLNs, we used a cutting-plane inference algorithm 
based on a SAT-based optimization. 
For inference in possibilistic logic, we also used cutting-plane inference in order to avoid having to ground the whole theory. To find the ground rules that need to be added by the cutting-plane method, we used a modified querying system from \cite{kuzelka08}. For solving and optimizing the resulting ground programs, we used the SAT4J library \cite{sat4j}. 

Note that to check whether $\Theta^k_\lambda \vdash F$, where $F$ is a (not necessarily ground) clause, it is sufficient to find one (type-respecting) grounding $\theta$ of $F$, 
and check whether $\Theta^k_\lambda \cup \{ \neg (F\theta) \}$ is inconsistent. 
In this way, we can check whether a rule is implied by $\Theta^k$ without grounding the whole theory because, as for MLNs, inference in non-ground possibilistic logic theories can be carried out by cutting-plane inference methods.

We implemented the transformation as a modification of the standard best-first search (BFS) algorithm which constructs incrementally larger candidate evidence sets, checks their MAP consequences and adds the respective rules to the possibilistic logic theory being constructed. Like the standard BFS algorithm it uses a hash-table based data structure {\it closed}, in which already processed evidence sets are stored. In order to avoid having to check isomorphism with every evidence set in {\it closed}, each time a new evidence set is considered, the stored evidence sets are enriched by fingerprints which contain some invariants, guaranteeing that no two variabilized evidence sets with different fingerprints are isomorphic. In this way, we can efficiently check for a given evidence set $E$ whether there is a previously generated evidence set $E'$ such that $\textit{variabilize}(E)$ and $\textit{variabilize}(E')$ are isomorphic.

As a final remark, we note that for the non-ground transformation, it may be preferable to replace any rule $\left(\textit{variabilize} \left( \neg \left(\bigwedge E \wedge \bigwedge X \right) \right), \lambda_E' \right)$ by the rule $\left(\textit{variabilize} \left( \neg \bigwedge E \right), \lambda_E' \right)$. The reason is that the former rules may often become too long in the non-ground case. On the other hand, for the ground transformation, the advantage of the longer rules is that they will often be the same for different sets $E$, which, in effect, means a smaller number of rules in the possibilistic logic theory. The correctness of this alternative to Transformation 4 is also shown in the online appendix.

\section{ILLUSTRATIVE EXAMPLES}\label{secExamples}

The first example is a variation on a classical problem from non-monotonic reasoning. Here, we want to express that birds generally fly, but heavy antarctic birds do not fly, unless they have a jet pack. The MLN which we will convert into possibilistic logic contains the following rules:
$10 :  \textit{bird}(X) \rightarrow \textit{flies}(X)$, 
$1  : \textit{antarctic}(X) \rightarrow \neg \textit{flies}(X)$,
$10  : \textit{heavy}(X) \rightarrow \neg \textit{flies}(X)$,
$100 : \textit{hasJetPack}(X) \rightarrow \textit{flies}(X)$.
When presented with this MLN, Transformation \ref{defTransformationFirstOrder} produces the following possibilistic logic theory.
\begin{align*}
( \neg \textit{antarctic}(X) \vee  \neg \textit{flies}(X), \lambda_{0}) \\
( \neg \textit{bird}(X) \vee \textit{flies}(X), \lambda_{0}) \\
( \neg \textit{heavy}(X) \vee  \neg \textit{flies}(X), \lambda_{0}) \\
(\textit{flies}(X) \vee  \neg \textit{hasJetPack}(X), \lambda_{0}) \\
( \neg \textit{bird}(X) \vee \textit{flies}(X) \vee \textit{hasJetPack}(X), \lambda_{1}) \\
( \neg \textit{heavy}(X) \vee \textit{antarctic}(X) \vee  \neg \textit{flies}(X), \lambda_{1}) \\
( \neg \textit{bird}(X) \vee  \neg \textit{heavy}(X), \lambda_{1}) \\
( \neg \textit{antarctic}(X) \vee  \neg \textit{heavy}(X) \vee  \neg \textit{flies}(X), \lambda_{10}) \\
(\textit{flies}(X) \vee  \neg \textit{hasJetPack}(X) \vee \textit{bird}(X), \lambda_{11}) \\
( \neg \textit{bird}(X) \vee \textit{flies}(X) \vee  \neg \textit{hasJetPack}(X), \lambda_{100})
\end{align*}
Let us consider the evidence set  $E=\{\textit{bird}(\textit{tweety}),\textit{heavy}(\textit{tweety})\}$. Then the levels $\lambda_0$ and $\lambda_1$ drown because of the inconsistency with the rule $( \neg \textit{bird}(X) \vee  \neg \textit{heavy}(X), \lambda_{1})$ which was produced as one of the rules \eqref{eqBlockingRuleDefaultsFirstOrder}. We can see from the rest of the possibilistic logic theory that unless we add either $\textit{antarctic}(\textit{tweety})$ or $\textit{hasJetPack}(\textit{tweety})$, we cannot say anything about whether {\it tweety} flies or not. It can be verified that the same is true also for the respective MLN.


The second example consists of formulas from a classical MLN about smokers. There are three predicates in this MLN: a binary predicate $\textit{f}(A,B)$ denoting that $A$ and $B$ are friends, and two unary predicates $\textit{s}(A)$ and $\textit{c}(A)$ denoting that $A$ smokes and that $A$ has cancer, respectively. The MLN contains the following hard rules: $\neg \textit{f}(\textit{A}, \textit{B}) \vee \textit{f}(\textit{B}, \textit{A})$ and $ \neg \textit{f}(\textit{A},\textit{A})$. In addition, we have two soft rules. The first soft rule $10 \colon \neg \textit{s}(\textit{A}) \vee  \neg \textit{f}(\textit{A}, \textit{B}) \vee \textit{s}(\textit{B})$ states that if $A$ and $B$ are friends and $A$ smokes then $B$ is more likely to smoke too. The second rule $10 \colon \neg \textit{s}(\textit{A}) \vee \textit{c}(\textit{A})$ states that smoking increases the likelihood of cancer. The following possibilistic logic theory was obtained using Transformation \ref{defTransformationFirstOrder} with $k=4$.
\begin{align*}
(\textit{s}(\textit{B}) \vee  \neg \textit{f}(\textit{A}, \textit{B}) \vee \neg \textit{s}(\textit{A}) \vee  \neg \textit{alldiff}(\textit{A}, \textit{B}), \lambda_{0}) \\
( \neg \textit{s}(\textit{A}) \vee \textit{c}(\textit{A}), \lambda_{0}) \\
( \neg \textit{f}(\textit{C}, \textit{B}) \vee  \neg \textit{f}(\textit{A}, \textit{B}) \vee \textit{s}(\textit{A}) \vee \textit{s}(\textit{C}) \quad\quad\quad\quad\quad\quad\quad\quad \\  \vee  \neg \textit{alldiff}(\textit{A}, \textit{B}, \textit{C}) \vee  \neg \textit{s}(\textit{B}), \lambda_{10}) \\
( \neg \textit{f}(\textit{C}, \textit{B}) \vee  \neg \textit{s}(\textit{A}) \vee  \neg \textit{f}(\textit{A}, \textit{C}) \vee \textit{s}(\textit{C}) \quad\quad\quad\quad\quad\quad\quad\quad \\ \vee \neg \textit{alldiff}(\textit{A}, \textit{B}, \textit{C}) \vee  \neg \textit{s}(\textit{B}), \lambda_{10}) \\
( \neg \textit{s}(\textit{A}) \vee  \neg \textit{f}(\textit{C}, \textit{A}) \vee \textit{s}(\textit{C}) \vee \textit{c}(\textit{B}) \quad\quad\quad\quad\quad\quad\quad\quad \\ \vee  \neg \textit{alldiff}(\textit{A}, \textit{B}, \textit{C}) \vee  \neg \textit{s}(\textit{B}), \lambda_{10}) \\
( \neg \textit{s}(\textit{A}) \vee \textit{c}(\textit{A}) \vee \textit{c}(\textit{B}) \vee  \neg \textit{s}(\textit{B}) \vee  \neg \textit{alldiff}(\textit{A}, \textit{B}), \lambda_{10}) \\
(\textit{s}(\textit{B}) \vee  \neg \textit{f}(\textit{A}, \textit{B}) \vee  \neg \textit{s}(\textit{A}) \vee \textit{c}(\textit{A}) \vee  \neg \textit{alldiff}(\textit{A}, \textit{B}), \lambda_{10}) \\
( \neg \textit{f}(\textit{A}, \textit{B}) \vee \textit{f}(\textit{B}, \textit{A}), 1) \\
( \neg \textit{f}(\textit{A}, \textit{A}), 1) 
\end{align*}
At the lowest level $\lambda_0$ we find the counterparts of the soft rules from the MLN, whereas at level 1 we find the hard rules. At the intermediate level we intuitively find weakened rules from the MLN. For instance, the rule $( \neg \textit{s}(\textit{A}) \vee \textit{c}(\textit{A}) \vee \textit{c}(\textit{B}) \vee  \neg \textit{s}(\textit{B}) \vee  \neg \textit{alldiff}(\textit{A}, \textit{B}), \lambda_{1})$ can be interpreted as: if $A$ and $B$ smoke then at least one of them has cancer. It is quite natural that this rule has higher certainty weight than the rule: if $A$ smokes then $A$ has cancer.

A final, more elaborate example is provided in the online appendix.

\section{RELATED WORK}\label{secRelatedWork}

One line of related work focuses on extracting a comprehensible model from another learned model that is difficult or impossible to interpret. A seminal work in this area is the TREPAN~\cite{craven.nips8} algorithm. Given a trained neural network and a data set, TREPAN learns a decision tree to mimic the predictions of the neural network. In addition to producing interpretable output, this algorithm was shown to learn accurate models that faithfully mimicked the neural network's predictions.  More recent research has focused on approximating complex ensemble classifiers with a single model. For example, Popovic et al.~\cite{popovic:bibm13} proposed a method for learning a single decision tree that mimics the predictions of a random forest. 

While, to the best of our knowledge, this is the first paper that studies the relation between Markov logic and possibilistic logic, the links between possibility theory and probability theory have been widely studied. For example, \cite{probpossTransformation} has proposed a probability-possibility transformation based on the view that a possibility measure corresponds to a particular family of probability measures. Dempster-Shafer evidence theory \cite{shafer1976mathematical} has also been used to provide a probabilistic interpretation to possibility degrees. In particular, a possibility distribution can be interpreted as the contour function of a mass assignment; see \cite{dubois1989fuzzy} for details. In \cite{Saint-cyr94penaltylogic} it is shown how the probability distribution induced by a penalty logic theory corresponds to the contour function of a mass assignment, which suggests that it is indeed natural to interpret this probability distribution as a possibility distribution. Several other links between possibility theory and probability theory have been discussed in \cite{dubois2006possibility}. 

In this paper, we have mainly focused on MAP inference. An interesting question is whether it would be possible to construct a (possibilistic) logic base that captures the set of accepted beliefs encoded by a probability distribution, where $A$ is accepted if $P(A)> P(\neg A)$. Unfortunately, the results in \cite{dubois2004ordinal} show that this is only possible for the limited class of so-called big-stepped probability distributions. In practice, this means that we would have to define a partition of the set of possible worlds, such that the probability distribution over the partition classes is big-stepped, and only capture the beliefs that are encoded by the latter, less informative, probability distribution. A similar approach was taken in \cite{doi:10.1142/S0218488503002235} to learn default rules from data.

\section{CONCLUSIONS}
This paper has focused on how a Markov logic network $\mathcal{M}$ can be encoded in possibilistic logic. We started from the observation that it is always possible to construct a possibilistic logic theory $\Theta_{\mathcal{M}}$ that is equivalent to $\mathcal{M}$, in the sense that the probability distribution induced by $\mathcal{M}$ is isomorphic to the possibility distribution induced by $\Theta_{\mathcal{M}}$. As a result, applying possibilistic logic inference to $\Theta_{\mathcal{M}}$ yields the same conclusions as applying MAP inference to $\mathcal{M}$. Although the size of $\Theta_{\mathcal{M}}$ is exponential in the number of formulas in $\mathcal{M}$, we have shown how more compact theories can be obtained in cases where we can put restrictions on the types of evidence that need to be considered (e.g.\ small sets of literals).

Our main motivation has been to use possibilistic logic as a way to make explicit the assumptions encoded in a given MLN. Among others, the possibilistic logic theory could be used to generate explanations for predictions made by the MLN, to gain insight into the data from which the MLN was learned, or to identify errors in the structure or weights of the MLN. Taking this last idea one step further, our aim for future work is to study methods for repairing a given MLN, based on the mistakes that have thus been identified.

\subsubsection*{Acknowledgements}
We would like to thank the anonymous reviewers for their helpful comments.
This work has been supported by a grant from the Leverhulme Trust (RPG-2014-164). JD is partially supported by the Research Fund KU Leuven (OT/11/051), EU FP7 Marie Curie Career Integration Grant (\#294068) and FWO-Vlaanderen(G.0356.12).

\bibliographystyle{abbrv}
\bibliography{mln,kr}

\begin{thebibliography}{10}

\bibitem{doi:10.1142/S0218488503002235}
S.~Benferhat, D.~Dubois, S.~Lagrue, and H.~Prade.
\newblock A big-stepped probability approach for discovering default rules.
\newblock {\em Int.\ Journal of Uncertainty, Fuzziness and Knowledge-Based
  Systems}, 11:1--14, 2003.

\bibitem{benferhat1997nonmonotonic}
S.~Benferhat, D.~Dubois, and H.~Prade.
\newblock {Nonmonotonic reasoning, conditional objects and possibility theory}.
\newblock {\em Artificial Intelligence}, 92(1-2):259--276, 1997.

\bibitem{sat4j}
D.~L. Berre and A.~Parrain.
\newblock The {SAT4J} library, release 2.2.
\newblock {\em Journal on Satisfiability, Boolean Modeling and Computation},
  7:50--64, 2010.

\bibitem{cayrol1994complexity}
C.~Cayrol and M.-C. Lagasquie-Schiex.
\newblock On the complexity of non-monotonic entailment in syntax-based
  approaches.
\newblock In {\em Proceedings of the 11th ECAI Workshop on Algorithms,
  Complexity and Commonsense Reasoning}, 1994.

\bibitem{craven.nips8}
M.~W. Craven and J.~W. Shavlik.
\newblock Extracting tree-structured representations of trained networks.
\newblock In D.~Touretzky, M.~Mozer, and M.~Hasselmo, editors, {\em Advances in
  Neural Information Processing Systems}, volume~8, pages 24--30. MIT Press,
  1996.

\bibitem{dubois2006possibility}
D.~Dubois.
\newblock Possibility theory and statistical reasoning.
\newblock {\em Computational statistics \& data analysis}, 51(1):47--69, 2006.

\bibitem{dubois2004ordinal}
D.~Dubois, H.~Fargier, and H.~Prade.
\newblock Ordinal and probabilistic representations of acceptance.
\newblock {\em Journal of Artificial Intelligence Research}, 22:23--56, 2004.

\bibitem{273026}
D.~Dubois, J.~Lang, and H.~Prade.
\newblock Automated reasoning using possibilistic logic: semantics, belief
  revision, and variable certainty weights.
\newblock {\em IEEE Trans.\ on Knowledge and Data Engineering}, 6(1):64--71,
  1994.

\bibitem{DLP}
D.~Dubois, J.~Lang, and H.~Prade.
\newblock Possibilistic logic.
\newblock In D.~N. D.~Gabbay, C. Hogger J.~Robinson, editor, {\em Handbook of
  Logic in Artificial Intelligence and Logic Programming}, volume~3, pages
  439--513. Oxford University Press, 1994.

\bibitem{probpossTransformation}
D.~Dubois and H.~Prade.
\newblock On several representations of an uncertain body of evidence.
\newblock In M.~Gupta and E.~Sanchez, editors, {\em Fuzzy Information and
  Decision Processes}, pages 167--181. North-Holland, 1982.

\bibitem{dubois1989fuzzy}
D.~Dubois and H.~Prade.
\newblock {Fuzzy sets, probability and measurement.}
\newblock {\em European Journal of Operational Research}, 40(2):135--154, 1989.

\bibitem{dubois1998possibility}
D.~Dubois and H.~Prade.
\newblock Possibility theory: qualitative and quantitative aspects.
\newblock In D.~Gabbay and P.~Smets, editors, {\em Handbook of Defeasible
  Reasoning and Uncertainty Management Systems}, volume~1, pages 169--226.
  Kluwer Academic, 1998.

\bibitem{Saint-cyr94penaltylogic}
F.~{Dupin de Saint-Cyr}, J.~Lang, and T.~Schiex.
\newblock Penalty logic and its link with {Dempster-Shafer} theory.
\newblock In {\em Proc.\ of the 10th International Conference on Uncertainty in
  Artificial Intelligence}, pages 204--211, 1994.

\bibitem{geffner1992conditional}
H.~Geffner and J.~Pearl.
\newblock Conditional entailment: Bridging two approaches to default reasoning.
\newblock {\em Artificial Intelligence}, 53(2):209--244, 1992.

\bibitem{KLM}
S.~Kraus, D.~Lehmann, and M.~Magidor.
\newblock Nonmonotonic reasoning, preferential models and cumulative logics.
\newblock {\em Artificial Intelligence}, 44(1-2):167--207, 1990.

\bibitem{kuzelka08}
O.~Ku{\v z}elka and F.~{\v Z}elezn{\'y}.
\newblock A restarted strategy for efficient subsumption testing.
\newblock {\em Fundamenta Informaticae}, 89(1):95--109, 2008.

\bibitem{La2001.1}
J.~Lang.
\newblock Possibilistic logic: complexity and algorithms.
\newblock In J.~Kohlas and S.~Moral, editors, {\em Algorithms for Uncertainty
  and Defeasible Reasoning}, volume~5 of {\em Handbook of Defeasible Reasoning
  and Uncertainty Management Systems}, pages 179--220. Kluwer Academic
  Publishers, 2001.

\bibitem{noessner13}
J.~Noessner, M.~Niepert, and H.~Stuckenschmidt.
\newblock {RockIt}: Exploiting parallelism and symmetry for map inference in
  statistical relational models.
\newblock In {\em Proc.\ of the 27th Conf.\ on Artificial Intelligence (AAAI)},
  2013.

\bibitem{pearl1990system}
J.~Pearl.
\newblock System {Z}: A natural ordering of defaults with tractable
  applications to nonmonotonic reasoning.
\newblock In {\em Proc.\ of the 3rd Conference on Theoretical Aspects of
  Reasoning about Knowledge}, pages 121--135, 1990.

\bibitem{Poole2003}
D.~Poole.
\newblock {First-order probabilistic inference}.
\newblock In {\em Proceedings of IJCAI}, volume~18, pages 985--991, 2003.

\bibitem{popovic:bibm13}
D.~Popovic, A.~Sifrim, Y.~Moreau, and B.~D. Moor.
\newblock {eXtasy} simplified - towards opening the black box.
\newblock In {\em Proc.\ of the IEEE International Conf.\ on Bioinformatics and
  Biomedicine (BIBM)}, pages 24--28, 2013.

\bibitem{Richardson2006}
M.~Richardson and P.~Domingos.
\newblock Markov logic networks.
\newblock {\em Machine Learning}, 62(1-2):107--136, 2006.

\bibitem{riedel08}
S.~Riedel.
\newblock Improving the accuracy and efficiency of {MAP} inference for {M}arkov
  logic.
\newblock In {\em Proc.\ of the 24th {C}onference on {U}ncertainty in
  {A}rtificial {I}ntelligence}.

\bibitem{Serrurier2007}
M.~Serrurier and H.~Prade.
\newblock Introducing possibilistic logic in {ILP} for dealing with exceptions.
\newblock {\em Artificial Intelligence}, 171(16--17):939--950, 2007.

\bibitem{shafer1976mathematical}
G.~Shafer.
\newblock {\em {A mathematical theory of evidence}}.
\newblock Princeton University Press, 1976.

\bibitem{Thimm:2014b}
M.~Thimm.
\newblock Coherence and compatibility of markov logic networks.
\newblock In {\em Proc.\ of the 21st European Conference on Artificial
  Intelligence}, pages 891--896, 2014.

\bibitem{ZadehPossibility}
L.~Zadeh.
\newblock Fuzzy sets as a basis for a theory of possibility.
\newblock {\em Fuzzy Sets and Systems}, 1:3--28, 1978.

\end{thebibliography}

\ifdefined\WITHSUPPLEMENT

\appendix
\section{PROOFS}

\begin{proof}[Proof of Proposition \ref{prop1}]
Let $X = \{(F,w) \in \mathcal{M} \colon \omega \not\models F \}$ be the set of rules from the MLN not satisfied in $\omega$. By construction, the possibilistic logic theory $\Theta$ contains the rule $(\bigvee X^*, \lambda)$ where $\lambda = \phi(\neg \bigvee X^*) = \frac{K+\textit{pen}(\mathcal{M},\neg \bigvee X^*)}{L} = \frac{K+\textit{pen}(\mathcal{M},\omega)}{L}$. As a result, $\omega \not\models \Theta_\lambda$. On the other hand, $\omega \models \Theta_{\lambda'}$ for any $\lambda' > \lambda$. 
Indeed, for $(\bigvee Y^*, \lambda') \in \Theta$ and $\lambda' > \lambda$, we have by construction that $Y^*\not\subseteq X^*$, i.e.\ $Y^*$ contains a formula $\alpha$ which is not in $X^*$. By definition of $X$, this means $\omega\models \alpha$, and thus in particular $\omega\models \bigvee Y^*$.
Since $\pi$ was assumed to be the least specific model of $\Theta$, it holds that $\pi(\omega) = 1 - \max_{(\alpha_i,\lambda_i)} \{ \lambda_i  | \omega \not\models \alpha_i \}$. Hence we can conclude $\pi(\omega) = 1 - \phi(\omega)$.
\end{proof}

\begin{proof}[Proof of Lemma \ref{lemma:firstlemma}]
The lemma can be proved by induction on $|E|$. The base case $|E| = 0$ is obvious. Let us assume that the lemma holds for $|E| = n < k$. If  $E$ does not contain any literal $y$ such that $(\mathcal{M},E\setminus \{y\}) \mapent y$ then $\Theta^k_\lambda$ contains a formula $\bigwedge E \rightarrow \bigwedge X$ with $x\in X$ and the lemma clearly holds. Otherwise, if there is a literal $y \in E$ such that $(\mathcal{M},E\setminus \{y\}) \mapent y$ then we must also have $(\mathcal{M},E\setminus \{y\}) \mapent x$. Moreover $\phi(E)=\phi(E\setminus \{y\})=\lambda$ in this case. By induction we find that the formula $\bigwedge (E \setminus \{ y \}) \rightarrow x$ can be derived from $\left\{ \left( \bigwedge E' \rightarrow \bigwedge X \right) \in \Theta^k_{\lambda} \mbox{ s.t. } |E'| \leq |E| \right\}$. 
\end{proof}

\begin{proof}[Proof of Lemma \ref{LemmaSelectiveMAPStates}]
We have that $\{F_1,...,F_l\} \cap \mathcal{Y} \neq \emptyset$ for every $\mathcal{Y} \in \textit{Cons}_E(\mathcal{M})$ iff $\bigvee \{ \bigwedge \mathcal{Y} \,|\, \mathcal{Y}\in \textit{Cons}_E(\mathcal{M})\} \wedge \neg F_1 \wedge ... \wedge \neg F_l$ is inconsistent. This in turn means that the most plausible world of $E\wedge \neg F_1 \wedge ... \wedge \neg F_l$ cannot be among the most plausible worlds of $E$, and thus $\textit{pen}(\mathcal{M}, E \wedge \neg F_1 \wedge ... \wedge \neg F_l) > \textit{pen}(\mathcal{M}, E)$.
\end{proof}

\begin{proof}[Proof of Lemma \ref{lemma:moving}]
(i) Let $\Omega$ be the set of most probable worlds of $(\mathcal{M}, E)$, let $\Omega'_i$ be the set of most probable worlds of $(\mathcal{M}, E \cup \{ \neg y_i \})$ and let $\Omega''_i$ be the set of most probable worlds of $(\mathcal{M}, E)$ in which $y_i$ is false. In general $\Omega'_i \neq \Omega_i''$. Either $(\mathcal{M}, E) \mapent (y_1 \wedge \dots \wedge y_k)$ or at least one of $\Omega''_i$ must be nonempty. Let $\Omega''_{i^*}$ be such a nonempty set. It must hold $(\mathcal{M} , E \cup \{ \neg y_{i^*} \}) \mapent (y_1 \vee \dots \vee y_{i^*-1} \vee y_{i^*+1} \vee \dots \vee y_k)$ because, in this case, $\Omega_{i^*}' = \Omega_{i^*}'' \subseteq \Omega$. (ii) The second part of the lemma follows from repeated application of the first part.
\end{proof}

\begin{proof}[Proof of Lemma \ref{lemma:clausalconsequence}]
($\Rightarrow$) It follows from Lemma \ref{lemma:lemma2} that any most probable model of $E$ is a model of $\Theta^k_\lambda \cup E$,   hence if $\cP_\lambda \cup E \vdash \bigvee C$ then $(\mathcal{M},E) \mapent \bigvee C$.

($\Leftarrow$) 
If $(\mathcal{M}, E) \mapent (c_{1} \vee \dots \vee c_{m})$, then by Lemma \ref{lemma:moving} we have
$(\mathcal{M}, E \cup  C') \mapent c_{j}$ for some $j\in\{1,...,m\}$ and $C' \subseteq \{ \neg c_{1}, \dots, \neg c_{j-1}, \neg c_{j+1}, \dots, \neg c_{m} \}$. Since $|E|+|C'| \leq k$, by Lemma \ref{lemma:firstlemma} we have $\Theta^k_\lambda \vdash \neg \left( \bigwedge E \right) \vee \neg \left( \bigwedge C'  \right) \vee c_j$ and therefore $\Theta^k_\lambda \cup E \vdash \bigvee C$.
\end{proof}

\subsubsection*{Correctness of Transformation \ref{defTransformationFirstOrder}}
In the following lemmas, $\mathcal{M}$ will be an MLN, $k$ will be an integer,  $\Upsilon^{k}_{\mathcal{M}}$ will be the ground possibilistic logic encoding given by Transformation \ref{transformationDefaults} in which, for convenience, we replace any rule $\left(\bigwedge E \rightarrow \bigwedge X, \lambda_E \right)$ by a set of rules $\left(\bigwedge E \rightarrow x, \lambda_E \right)$ where $x \in X$. $\Theta^k_{\mathcal{M}}$ will be the non-ground possibilistic logic encoding given by Transformation \ref{defTransformationFirstOrder}. Recall that $\Theta^k_\mathcal{M}$ is given together with a set of constants, typed according to their interchangeability. The ground possibilistic logic theory obtained by grounding $\Theta_{\mathcal{M}}^k$ using this set of typed constants will be denoted by $\widehat{\Theta}_{\mathcal{M}}^k$.

\begin{lemma}\label{lemma:nonground1}
Let $\alpha = (\bigwedge E \rightarrow x, \lambda_E)$ be a rule. Then $\alpha \in \Upsilon^k_{\mathcal{M}}$ if and only if $\alpha \in \widehat{\Theta}_{\mathcal{M}}^k.$
\end{lemma}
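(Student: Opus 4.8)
The plan is to prove Lemma~\ref{lemma:nonground1} by showing that the two transformations generate \emph{exactly the same} ground rules of the form $(\bigwedge E \rightarrow x, \lambda_E)$, by unwinding the two definitions and matching up the conditions under which a rule is added. First I would fix a rule $\alpha = (\bigwedge E \rightarrow x, \lambda_E)$ and recall from Transformation~\ref{transformationDefaults} (in the modified form producing one rule per conjunct $x \in X$) that $\alpha \in \Upsilon^k_{\mathcal{M}}$ precisely when $|E| \leq k$, $x \in \{x' \,|\, (\mathcal{M},E)\mapent x'\}$, and there is no literal $y \in E$ with $(\mathcal{M},E\setminus\{y\})\mapent y$. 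On the other side, by Transformation~\ref{defTransformationFirstOrder}, $\alpha \in \widehat{\Theta}^k_{\mathcal{M}}$ iff some variabilized rule $(\textit{variabilize}(\bigwedge E' \rightarrow x'),\lambda_{E'})$ was added to $\Theta^k_{\mathcal{M}}$ for an evidence set $E'$ (with the same defining conditions, plus the ``unless already isomorphic'' guard) and one of its groundings produces exactly $\bigwedge E \rightarrow x$ with the correct weight.

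The key step is therefore the correspondence between a ground evidence set $E$ and the variabilized evidence set $E'$ from which its rule descends. I would argue: (i) if $\alpha \in \Upsilon^k_{\mathcal{M}}$, then $\textit{variabilize}(\bigwedge E \rightarrow x)$ (or some formula isomorphic to it) is added to $\Theta^k_{\mathcal{M}}$ when processing $E$ itself --- and crucially the MAP-consequence conditions are invariant under the interchangeability permutations $j$ (since $j(\mathcal{M})\approx\mathcal{M}$ implies $\max(\mathcal{M},E) $ and $\max(\mathcal{M},jE)$ are related by $j$, hence $(\mathcal{M},E)\mapent x \Leftrightarrow (\mathcal{M},jE)\mapent jx$ and likewise for the $y\in E$ exception), so the ``already contains an isomorphic formula'' guard never suppresses a rule whose grounding we need; grounding the typed variabilized rule back over the constants then recovers $\bigwedge E \rightarrow x$ exactly, because the $\textit{alldiff}$/$V_c\neq V_d$ guards in $\textit{variabilize}$ are satisfied precisely by injective substitutions, which are the ones that reproduce $E$. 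Conversely (ii) if $\alpha \in \widehat{\Theta}^k_{\mathcal{M}}$, it arises as a grounding of some $(\textit{variabilize}(\bigwedge E'\rightarrow x'),\lambda_{E'})$; the grounding substitution $\theta$ must be injective on each type (else the guard literals kill the clause), so $E'\theta$ is an evidence set isomorphic to $E$, $x'\theta = x$, $\lambda_{E'} = \lambda_{E'\theta} = \lambda_E$ (weights depend only on penalty, hence on the isomorphism class), and the MAP conditions for $E'$ transfer to $E$ by the same symmetry argument, giving $\alpha \in \Upsilon^k_{\mathcal{M}}$.

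Two subsidiary facts need to be nailed down along the way: that $\textit{variabilize}$ followed by grounding over the interchangeable constants, restricted to injective substitutions, is a bijection onto the orbit of $E$ under the interchangeability group (so no spurious extra ground rules appear and none are missed); and that $\phi$, and hence $\lambda_E$, is constant on such orbits, which follows from $\textit{pen}(\mathcal{M},E)=\textit{pen}(\mathcal{M},jE)$ for interchangeable $j$. I expect the main obstacle to be precisely the bookkeeping around the ``unless $\Theta^k_{\mathcal{M}}$ already contains an isomorphic formula'' guard: one has to be careful that when this guard fires, the formula already present really does have, among its groundings, the exact clause $\bigwedge E \rightarrow x$ with weight $\lambda_E$ --- i.e. that isomorphic variabilized rules have identical ground instance sets and identical weights. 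Once that invariance under the interchangeability group is established cleanly, both directions of the ``if and only if'' are essentially a matter of tracing the definitions.
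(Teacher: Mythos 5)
Your proposal is correct and follows essentially the same route as the paper's own proof: both directions are handled by unwinding the two transformations, using the interchangeability of same-type constants to transfer the MAP-consequence conditions (and the weight $\lambda_E$) between a ground evidence set and the variabilized rule it descends from, and observing that the ``already contains an isomorphic formula'' guard is harmless because isomorphic variabilized rules have the same ground instances. You are in fact somewhat more explicit than the paper about the orbit/injective-substitution bookkeeping and the invariance of $\phi$ on orbits, which the paper leaves implicit.
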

\begin{proof}
($\Rightarrow$) If $\alpha \in \Upsilon^k_{\mathcal{M}}$ then a rule isomorphic to $(\textit{variabilize}\left(  \bigwedge E \rightarrow x \right), \lambda_E)$ must be contained in $\Theta^k_\mathcal{M}$ but then $\alpha$ must be among the groundings of that rule and therefore also be in $\widehat{\Theta}_{\mathcal{M}}^k$.

($\Leftarrow$) If $\alpha \in \widehat{\Theta}_{\mathcal{M}}^k$ then $\Theta^k_\mathcal{M}$ must contain a rule $\left(\textit{variabilize}\left(\bigwedge E' \rightarrow x'\right), \lambda_E\right)$ isomorphic to $\left(\textit{variabilize}\left(\bigwedge E \rightarrow x\right), \lambda_E\right)$ such that $\left(\mathcal{M},E'\right) \mapent x'$ and such that there is no literal $y' \in E'$ satisfying $(\mathcal{M},E'\setminus \{ y' \}) \mapent y'$. It follows from the fact that constants of the same type are interchangeable that then also $\left(\mathcal{M},E\right) \mapent x$ and that there is no literal $y \in E$ satisfying $(\mathcal{M},E\setminus \{ y \}) \mapent y$. From this it immediately follows that $\alpha \in \Upsilon^k_\mathcal{M}$.
\end{proof}

\begin{lemma}\label{lemma:nonground2}
Let $\alpha = \left( \neg \left(\bigwedge E \wedge \bigwedge X \right), \lambda_E' \right)$ be a rule. Then $\alpha \in \Upsilon^k_{\mathcal{M}}$ if and only if $\alpha \in \widehat{\Theta}_{\mathcal{M}}^k$.
\end{lemma}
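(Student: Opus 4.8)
The plan is to follow the proof of Lemma~\ref{lemma:nonground1} almost verbatim, the only genuinely new point being the weight $\lambda_E'$ of a blocking rule, which is defined indirectly in terms of the other certainty levels present in the theory. So the first step would be to record that $\Upsilon^k_{\mathcal{M}}$ and $\widehat{\Theta}^k_{\mathcal{M}}$ contain exactly the same set of certainty levels. Every such level is either $1$, contributed by a hard rule and treated identically by Transformations~\ref{transformationDefaults} and~\ref{defTransformationFirstOrder}, or a level $\lambda_E = \phi(\bigwedge E)$ carried by an implication rule $(\bigwedge E \rightarrow x, \lambda_E)$; the blocking rules introduce no new levels, since by construction $\lambda_E'$ is one of the $\lambda_F$. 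By Lemma~\ref{lemma:nonground1} the implication rules of the two theories coincide, so the level ``just below $\lambda_E$'' is the same whether it is read off $\Upsilon^k_{\mathcal{M}}$ or off $\widehat{\Theta}^k_{\mathcal{M}}$. Hence whenever a blocking rule for $E$ is added to either theory, it receives the same weight $\lambda_E'$.

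For the direction ($\Rightarrow$), suppose $\alpha = (\neg(\bigwedge E \wedge \bigwedge X),\lambda_E') \in \Upsilon^k_{\mathcal{M}}$; then $|E|\le k$, $X = \{x \mid (\mathcal{M},E)\mapent x\}$ and $\textit{pen}(\mathcal{M},E) > \textit{pen}(\mathcal{M},\emptyset)$. When Transformation~\ref{defTransformationFirstOrder} processes $E$ it verifies exactly this penalty condition, so it either adds $(\textit{variabilize}(\neg(\bigwedge E\wedge\bigwedge X)),\lambda_E')$ directly, or omits it only because $\Theta^k_{\mathcal{M}}$ already contains an isomorphic rule; the latter, having been produced from some evidence set $E''$ isomorphic to $E$, carries the weight $\lambda_{E''}' = \lambda_E'$ by the first step together with interchangeability of equally-typed constants. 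Either way $\Theta^k_{\mathcal{M}}$ contains a rule isomorphic to $\textit{variabilize}(\neg(\bigwedge E\wedge\bigwedge X))$ with weight $\lambda_E'$, and since the equally-typed constants occurring in $E\cup X$ are pairwise distinct, $\neg(\bigwedge E\wedge\bigwedge X)$ is (up to the isomorphism) a type-respecting grounding of that rule in which every disequality conjunct introduced by $\textit{variabilize}$ is satisfied. Hence $\alpha\in\widehat{\Theta}^k_{\mathcal{M}}$. For ($\Leftarrow$), if $\alpha\in\widehat{\Theta}^k_{\mathcal{M}}$ then $\Theta^k_{\mathcal{M}}$ contains a variabilized blocking rule $(\textit{variabilize}(\neg(\bigwedge E'\wedge\bigwedge X')),\lambda_{E'}')$ of which $\alpha$ is a grounding; thus $X' = \{x\mid(\mathcal{M},E')\mapent x\}$, $\textit{pen}(\mathcal{M},E') > \textit{pen}(\mathcal{M},\emptyset)$, and a type-respecting substitution maps $E'$ to $E$ and $\bigwedge X'$ to $\bigwedge X$. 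Exactly as in Lemma~\ref{lemma:nonground1}, interchangeability ($j(\mathcal{M})\approx\mathcal{M}$) transfers these properties back to $E$, so $X = \{x\mid(\mathcal{M},E)\mapent x\}$ and $\textit{pen}(\mathcal{M},E) > \textit{pen}(\mathcal{M},\emptyset)$, which are precisely the conditions under which Transformation~\ref{transformationDefaults} adds the blocking rule for $E$; combined with the first step (the weights agree) this gives $\alpha\in\Upsilon^k_{\mathcal{M}}$.

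The part I expect to need the most care is the interaction between the first step and the isomorphism-based deduplication in Transformation~\ref{defTransformationFirstOrder}: one must be sure that when the non-ground transformation suppresses a blocking rule because ``$\Theta^k_{\mathcal{M}}$ already contains an isomorphic formula'', that formula genuinely grounds to $\alpha$ \emph{with the weight $\lambda_E'$}, and this is exactly where the equality of the two level sets, together with the isomorphism-invariance of $\textit{pen}$ and of MAP consequences, are used. A comparatively minor point is to check that the disequality constraints inserted by $\textit{variabilize}$ do not cause $\alpha$ to be lost among the groundings; this is immediate because $E\cup X$ is assembled from the distinct constants of a fixed ground evidence set, so the grounding reproducing $\alpha$ satisfies all of those disequalities.
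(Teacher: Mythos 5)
Your proposal is correct and follows essentially the same route as the paper: both directions rest on the observation that $\alpha$ is a grounding of a rule isomorphic to $\textit{variabilize}\left(\neg\left(\bigwedge E \wedge \bigwedge X\right)\right)$ together with the interchangeability of equally-typed constants, which transfers the conditions $(\mathcal{M},E')\mapent x'$ and the penalty condition between $E$ and $E'$. The only difference is that you additionally verify that the indirectly defined weight $\lambda_E'$ agrees between $\Upsilon^k_{\mathcal{M}}$ and $\widehat{\Theta}^k_{\mathcal{M}}$ (via the coincidence of the level sets guaranteed by Lemma~\ref{lemma:nonground1}), a point the paper's proof leaves implicit; this is a legitimate and harmless strengthening rather than a divergence.
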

\begin{proof}
The proof of this lemma is very similar to the proof of Lemma \ref{lemma:nonground1}.

($\Rightarrow$) If $\alpha \in \Upsilon^k_\mathcal{M}$ then a rule isomorphic to $(\textit{variabilize}\left(  \bigwedge E \wedge \bigwedge X \right), \lambda_E')$ must be contained in $\Theta^k_\mathcal{M}$ but then $\alpha$ must be among the groundings of that rule and therefore also be in $\widehat{\Theta}_{\mathcal{M}}^k$.

($\Leftarrow$) If $\alpha \in \widehat{\Theta}_{\mathcal{M}}^k$ then $\Theta^k_\mathcal{M}$ must contain a rule $\left(\textit{variabilize}\left(\bigwedge E' \wedge \bigwedge X'\right), \lambda_E'\right)$ isomorphic to $\left(\textit{variabilize}\left(\bigwedge E \wedge \bigwedge x\right), \lambda_E'\right)$ such that $\left(\mathcal{M},E'\right) \mapent x'$ for all $x' \in X'$. It follows from the fact that constants of the same type are interchangeable that then also $\left(\mathcal{M},E\right) \mapent x$ for all $x \in X$, from which it immediately follows that $\alpha \in \Upsilon^k_\mathcal{M}$.
\end{proof}

\begin{proposition}\label{firstordercorrectness}
Given an MLN $\mathcal{M}$ and an integer $k$, the possibilistic logic theories obtained by Transformation \ref{transformationDefaults} and by grounding the possibilistic logic theory obtained by Transformation \ref{defTransformationFirstOrder} are equivalent.
\end{proposition}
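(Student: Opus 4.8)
The plan is to reduce the statement to the two combinatorial lemmas \ref{lemma:nonground1} and \ref{lemma:nonground2}, which already match up the individual rules of the two encodings. First I would observe that a possibilistic rule $(\bigwedge E \rightarrow \bigwedge X, \lambda)$ is equivalent, as a possibilistic theory, to the set $\{(\bigwedge E \rightarrow x,\lambda) \,|\, x \in X\}$: splitting a conjunctive consequent changes no $\lambda$-cut, hence not the least specific model. Consequently the theory produced by Transformation \ref{transformationDefaults} is equivalent to $\Upsilon^k_{\mathcal{M}}$, and it suffices to prove that $\Upsilon^k_{\mathcal{M}}$ and $\widehat{\Theta}^k_{\mathcal{M}}$ are equivalent (in the sense that their $\lambda$-cuts are classically equivalent for every $\lambda$).

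Next I would analyse the groundings of the typed rules so as to bring $\widehat{\Theta}^k_{\mathcal{M}}$ into the same shape as $\Upsilon^k_{\mathcal{M}}$. A ground instance of $(\textit{variabilize}(G),\lambda)$ arises from a substitution $\theta$ of the variables $V_c$. If $\theta$ identifies two constants of the same type, one of the conjuncts $V_c \neq V_d$ in the antecedent of $\textit{variabilize}(G)$ becomes false and the ground rule is the tautology $(\top,\lambda)$, which may be discarded; otherwise all these conjuncts are true and the ground rule is logically equivalent to the corresponding ground instance of $G$, which is of the form $(\bigwedge E \rightarrow x, \lambda_E)$ or $(\neg(\bigwedge E \wedge \bigwedge X),\lambda_E')$. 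For the hard rules this is immediate: grounding the (non-ground) hard rules of $\mathcal{M}$ yields exactly the ground hard rules $(F,1)$ that Transformation \ref{transformationDefaults} also adds. Thus, modulo discarding tautologies and simplifying away true $\neq$-conjuncts, $\widehat{\Theta}^k_{\mathcal{M}}$ is a set of rules of exactly the three shapes $(F,1)$, $(\bigwedge E \rightarrow x,\lambda_E)$, $(\neg(\bigwedge E \wedge \bigwedge X),\lambda_E')$.

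Then I would apply Lemma \ref{lemma:nonground1} to the rules of the form $(\bigwedge E \rightarrow x,\lambda_E)$ and Lemma \ref{lemma:nonground2} to the rules of the form $(\neg(\bigwedge E \wedge \bigwedge X),\lambda_E')$, obtaining in each case that such a rule lies in $\Upsilon^k_{\mathcal{M}}$ iff it lies in $\widehat{\Theta}^k_{\mathcal{M}}$; together with the fact that both theories contain precisely the same hard rules, this shows that $\Upsilon^k_{\mathcal{M}}$ and $\widehat{\Theta}^k_{\mathcal{M}}$ contain the same rules up to the harmless simplifications above, hence are equivalent, hence have the same least specific model and induce the same $\possent$.

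The step needing the most care is making the certainty levels match, specifically checking that $\lambda_E'$ --- ``the certainty level just below $\lambda_E$ in $\Theta^k_{\mathcal{M}}$'' --- denotes the same value in the ground and the non-ground construction. This comes down to verifying that the set of certainty levels occurring in $\Upsilon^k_{\mathcal{M}}$ equals the one occurring in $\widehat{\Theta}^k_{\mathcal{M}}$: the level $\lambda_E = \phi(\bigwedge E)$ depends only on $\textit{pen}(\mathcal{M},E)$, which is invariant under permutations of interchangeable constants, so the same levels are generated on both sides; and although Transformation \ref{defTransformationFirstOrder} skips a rule when an isomorphic one is already present, it never drops a level, since some isomorphic rule carrying that level has already been added. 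The corresponding invariance of the pruning tests ``$(\mathcal{M},E\setminus\{y\}) \mapent y$'' and ``$(\mathcal{M},E) \mapent x$'' under interchangeability is exactly what the proofs of Lemmas \ref{lemma:nonground1} and \ref{lemma:nonground2} already use, so no further argument is required there.
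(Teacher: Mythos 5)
Your proposal is correct and follows essentially the same route as the paper, whose proof simply invokes Lemmas \ref{lemma:nonground1} and \ref{lemma:nonground2}; you additionally spell out the steps the paper leaves implicit (splitting conjunctive consequents, discarding tautological groundings of the typed $\neq$-guards, and matching the certainty levels $\lambda_E'$), all of which check out.
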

\begin{proof}
The proof follows from Lemma \ref{lemma:nonground1} and Lemma \ref{lemma:nonground2}.
\end{proof}

\subsubsection*{Correctness of the alternative to Transformation \ref{defTransformationFirstOrder}}
We show the correctness of the alternative transformation, in which we replace rules of the form 
$$
\left(\textit{variabilize} \left( \neg \left(\bigwedge E \wedge \bigwedge X \right) \right), \lambda_E' \right)
$$ 
by the rule
$$
\left(\textit{variabilize} \left( \neg \bigwedge E \right), \lambda_E' \right)
$$
The correctness of this alternative transformation can be shown as follows. Let $\Theta^k$ be the possibilistic logic theory obtained by Transformation \ref{defTransformationFirstOrder} and let $\Phi^k$ be the possibilistic logic theory obtained by the alternative transformation. It holds that $\neg \bigwedge E \models \neg (\bigwedge E \wedge \bigwedge X)$ and consequently also $\textit{variabilize}(\neg \bigwedge E) \models \textit{variabilize}(\neg (\bigwedge E \wedge \bigwedge X))$. Therefore if $\Theta^k_\lambda \cup F$ is inconsistent then $\Phi^k_\lambda \cup F$ must be inconsistent as well. Moreover, since $\lambda_E'< \phi(E)$, we can show (using arguments analogical to those used in the proof of Proposition \ref{prop:defaultbased}) that this alternative transformation also satisfies the properties stated for Transformation \ref{defTransformationFirstOrder} in Proposition~\ref{firstordercorrectness}.

\section{ADDITIONAL EXAMPLE}

\begin{table*}[t]
\centering
  \begin{tabular}{ll} \hline
$1$ & $\neg \textit{wrote}(\textit{per:A},\textit{pap:C}) \vee \neg \textit{wrote}(\textit{per:A},\textit{pap:B}) \vee \textit{category}(\textit{pap:C},\textit{cat:D}) \vee \neg \textit{category}(\textit{pap:B},\textit{cat:D})$ \\
$2$ & $\neg \textit{refers}(\textit{pap:A},\textit{pap:B}) \vee \textit{category}(\textit{pap:B},\textit{cat:C}) \vee \neg \textit{category}(\textit{pap:A},\textit{cat:C})$ \\
$2$ & $\neg \textit{refers}(\textit{pap:A},\textit{pap:B}) \vee \textit{category}(\textit{pap:A},\textit{cat:C}) \vee \neg \textit{category}(\textit{pap:B},\textit{cat:C})$ \\
$-3$ & $\textit{category}(\textit{pap:A},\textit{cat:net})$ \\
$0.14$ & $\textit{category}(\textit{pap:A},\textit{cat:prog})$ \\
$0.09$ & $\textit{category}(\textit{pap:A},\textit{cat:os})$ \\
$0.04$ & $\textit{category}(\textit{pap:A},\textit{cat:hw\_arch})$ \\
$0.11$ & $\textit{category}(\textit{pap:A},\textit{cat:ds\_alg})$ \\
$0.04$ & $\textit{category}(\textit{pap:A},\textit{cat:enc\_compr})$ \\
$0.02$ & $\textit{category}(\textit{pap:A},\textit{cat:ir})$ \\
$0.05$ & $\textit{category}(\textit{pap:A},\textit{cat:db})$ \\
$0.39$ & $\textit{category}(\textit{pap:A},\textit{cat:ai})$ \\
$0.06$ & $\textit{category}(\textit{pap:A},\textit{cat:hci})$ \\
$0.06$ & $\textit{category}(\textit{pap:A},\textit{cat:net})$ \\ 
$\infty$ & $\textit{eq}(\textit{cat:B},\textit{cat:C}) \vee \neg \textit{category}(\textit{pap:A},\textit{cat:C}) \vee \neg \textit{category}(\textit{pap:A},\textit{cat:B})$ \\ \hline
  \end{tabular}
  \caption{Markov Logic Network for CORA.}
  \label{tab:1}
\end{table*}

In this section we illustrate Transformation \ref{defTransformationFirstOrder} on a larger MLN trained for predicting categories of computer science papers\footnote{We obtained the structure of the MLN from the Tuffy web {\tt http://i.stanford.edu/hazy/tuffy/download/}.}, which is shown in Table \ref{tab:1}. This MLN contains rules for predicting categories of papers from categories of other papers which refer to them (rules 2-3) or from categories of papers written by the same author (rule 3). In addition it contains rules giving prior probabilities of the individual probabilities and a hard rule specifying that every paper has at most one category (for simplicity).

We applied Transformation \ref{defTransformationFirstOrder} on this MLN which resulted in a possibilistic logic theory with 200 rules\footnote{There are two modes for filtering redundant rules in the possibilistic logic theory in our implementation. The more aggressive filtering iteratively removes rules which are entailed by other rules in the theory (with the same or greater certainty weights) whereas the less aggressive filtering only iteratively removes the rules which are entailed by subset of the rules from the theory which are all shorter or equally long. The latter filtering results in slightly more interpretable results. In the experiments reported here, we have used the less aggressive filtering.}. Table \ref{tab:2} displays a subset of the rules in the resulting theory, which are interesting for illustrating some properties of the MLN and which are not immediately obvious from the MLN itself. Notice that we represent, apart from the formulas of the form (\ref{eqBlockingRuleDefaultsFirstOrder}), we present formuals in the implication form to make the interpretation easier. 

Rule $\alpha$ is one of the rules enforcing drowning. Rule $\beta$ states that in absence of other evidence, the category of any paper is assumed to be AI. Rule $\gamma$ cannot intuitively be justified and is probably an unintended consequence of the MLN. It states that if $V1$ wrote a paper which is not from the category AI, then $V1$ did not write any other paper. This rule is not harmful when the MLN is used for the purpose of predicting categories but it indicates that the MLN would not be suitable for predicting authorship (which is not really surprising given that the MLN was not trained for this task). The rules $\delta$ and $\eta$ are representatives of rules which capture the prior distribution of the categories (note that there are more rules of this kind which we do not show here). Rule $\zeta$ is similar to $\gamma$. The rules $\iota$, $\kappa$, $\lambda$, $\mu$, $\nu$ and $\xi$ state that if one paper refers to another paper then they typically have the same category. Such rules actually give evidence of meaningfulness of the MLN for prediction of categories.

\begin{table*}[t]
\centering
  \begin{tabular}{l} \hline
$\alpha = ( \neg \textit{category}(\textit{pap:V1}, \textit{cat:prog}), \lambda_{0})$ \\
$\beta = ( \rightarrow \textit{category}(\textit{pap:V1}, \textit{cat:ai}), \lambda_{0})$ \\ \hline
$\gamma = (\textit{wrote}(\textit{per:V1}, \textit{pap:V3}) \wedge \neg \textit{eq}(\textit{pap:V2}, \textit{pap:V3}) \wedge  \neg \textit{category}(\textit{pap:V2}, \textit{cat:ai}) \rightarrow  \neg \textit{wrote}(\textit{per:V1}, \textit{pap:V2}), \lambda_{1})$ \\
$\delta§ = ( \neg \textit{category}(\textit{pap:V1}, \textit{cat:ai}) \rightarrow \textit{category}(\textit{pap:V1}, \textit{cat:prog}), \lambda_{1})$ \\
$\epsilon = ( \neg \textit{category}(\textit{pap:V1}, \textit{cat:ds\_alg\_th}), \lambda_{1})$ \\ \hline
$\zeta = (\textit{category}(\textit{pap:V2}, \textit{cat:ds\_alg}) \wedge \textit{wrote}(\textit{per:V1}, \textit{pap:V3}) \wedge \neg \textit{eq}(\textit{pap:V2}, \textit{pap:V3}) \rightarrow \neg \textit{wrote}(\textit{per:V1}, \textit{pap:V2}), \lambda_{2})$ \\
$\eta = ( \neg \textit{category}(\textit{pap:V1}, \textit{cat:ai}) \wedge  \neg \textit{category}(\textit{pap:V1}, \textit{cat:prog}) \rightarrow \textit{category}(\textit{pap:V1}, \textit{cat:ds\_alg}), \lambda_{2})$ \\
$\theta = ( \neg \textit{category}(\textit{pap:V1}, \textit{cat:os}), \lambda_{2})$ \\ \hline
$\dots$\\ \hline
$\iota = (\neg \textit{eq}(\textit{pap:V1}, \textit{pap:V2}) \wedge \textit{refers}(\textit{pap:V1}, \textit{pap:V2}) \wedge  \neg \textit{category}(\textit{pap:V2}, \textit{cat:ai}) \rightarrow \textit{category}(\textit{pap:V1}, \textit{cat:prog}), \lambda_{8})$ \\
$\kappa = (\neg \textit{eq}(\textit{pap:V1}, \textit{pap:V2}) \wedge \textit{refers}(\textit{pap:V2}, \textit{pap:V1}) \wedge  \neg \textit{category}(\textit{pap:V2}, \textit{cat:ai}) \rightarrow \textit{category}(\textit{pap:V1}, \textit{cat:prog}), \lambda_{8})$ \\
\dots
\\ \hline
\dots \\
\hline 
$\lambda = (\neg \textit{eq}(\textit{pap:V1}, \textit{pap:V2}) \wedge \textit{refers}(\textit{pap:V2}, \textit{pap:V1}) \wedge \textit{category}(\textit{pap:V2}, \textit{cat:ds\_alg}) \rightarrow$ \\ \multicolumn{1}{r}{$\textit{category}(\textit{pap:V1}, \textit{cat:ds\_alg}), \lambda_{11})$} \\
$\mu = (\neg \textit{eq}(\textit{pap:V1}, \textit{pap:V2}) \wedge \textit{refers}(\textit{pap:V2}, \textit{pap:V1}) \wedge \textit{category}(\textit{pap:V1}, \textit{cat:ds\_alg}) \rightarrow $ \\ \multicolumn{1}{r}{$\textit{category}(\textit{pap:V2}, \textit{cat:ds\_alg}), \lambda_{11})$} \\ 
\dots \\ \hline
\dots \\ \hline
$\nu = (\neg \textit{eq}(\textit{pap:V1}, \textit{pap:V2}) \wedge \textit{refers}(\textit{pap:V2}, \textit{pap:V1}) \wedge \textit{category}(\textit{pap:V1}, \textit{cat:os}) \rightarrow \textit{category}(\textit{pap:V2}, \textit{cat:os}), \lambda_{14})$ \\
$\xi = (\neg \textit{eq}(\textit{pap:V1}, \textit{pap:V2}) \wedge \textit{refers}(\textit{pap:V2}, \textit{pap:V1}) \wedge \textit{category}(\textit{pap:V2}, \textit{cat:os}) \rightarrow \textit{category}(\textit{pap:V1}, \textit{cat:os}), \lambda_{14})$ \\
\dots \\ \hline
  \end{tabular}
  \caption{Subset of the possibilistic logic theory obtained for the CORA MLN with maximum evidence set size $k = 2$. The different levels of the theory are separated by horizontal lines.}
  \label{tab:2}
\end{table*}

\fi

\end{document}